\DeclarePairedDelimiter{\ceil}{\lceil}{\rceil}
\newcommand{\defeq}{\vcentcolon=}
\newcommand{\mt}{\mathsf{T}}
\newtheorem{assumption}{Assumption}
\newtheorem{remark}{Remark}
\DeclareMathOperator*{\argmax}{arg\,max}
\DeclareMathOperator*{\argmin}{arg\,min}
\def \btheta {\mathrm{\boldsymbol{\theta}}}
\def \bA {\mathbf{A}}
\def \bx {\mathbf{x}}
\def \cA {\mathcal{A}}
\def \bb {\mathbf{b}}
\def \bI {\mathbf{I}}
\def \cI {\mathcal{I}}
\def \cM {\mathcal{M}}
\def \bbP {\mathbb{P}}
\def \bbE {\mathbb{E}}
\begin{document}

\copyrightyear{2018} 
\acmYear{2018} 
\setcopyright{acmcopyright}
\acmConference[SIGIR '18]{The 41st International ACM SIGIR Conference on Research & Development in Information Retrieval}{July 8--12, 2018}{Ann Arbor, MI, USA}
\acmBooktitle{SIGIR '18: The 41st International ACM SIGIR Conference on Research \& Development in Information Retrieval, July 8--12, 2018, Ann Arbor, MI, USA}
\acmPrice{15.00}
\acmDOI{10.1145/3209978.3210051}
\acmISBN{978-1-4503-5657-2/18/07}
\fancyhead{}
\title{Learning Contextual Bandits in a Non-stationary Environment}

\author{Qingyun Wu, Naveen Iyer, Hongning Wang}
\affiliation{%
  \institution{ Department of Computer Science, University of Virginia }
  \streetaddress{85 Engineer's Way}
  \city{Charlottesville} 
  \state{VA, USA} 
  \postcode{22904}
}
\email{{qw2ky, nki2kd, hw5x}@virginia.edu}

\begin{abstract}

Multi-armed bandit algorithms have become a reference solution for handling the explore/exploit dilemma in recommender systems, and many other important real-world problems, such as display advertisement. However, such algorithms usually assume a stationary reward distribution, which hardly holds in practice as users' preferences are dynamic. This inevitably costs a recommender system consistent suboptimal performance. In this paper, we consider the situation where the underlying distribution of reward remains unchanged over (possibly short) epochs and shifts at unknown time instants. In accordance, we propose a contextual bandit algorithm that detects possible changes of environment based on its reward estimation confidence and updates its arm selection strategy respectively. Rigorous upper regret bound analysis of the proposed algorithm demonstrates its learning effectiveness in such a non-trivial environment. Extensive empirical evaluations on both synthetic and real-world datasets for recommendation confirm its practical utility in a changing environment.
\end{abstract}

%
%
\begin{CCSXML}
<ccs2012>
<concept>
<concept_id>10002951.10003317.10003347.10003350</concept_id>
<concept_desc>Information systems~Recommender systems</concept_desc>
<concept_significance>500</concept_significance>
</concept>
<concept>
<concept_id>10003752.10003809.10010047.10010048</concept_id>
<concept_desc>Theory of computation~Online learning algorithms</concept_desc>
<concept_significance>500</concept_significance>
</concept>
<concept>
<concept_id>10003752.10010070.10010071.10011194</concept_id>
<concept_desc>Theory of computation~Regret bounds</concept_desc>
<concept_significance>500</concept_significance>
</concept>
<concept>
<concept_id>10010147.10010257.10010258.10010261.10010272</concept_id>
<concept_desc>Computing methodologies~Sequential decision making</concept_desc>
<concept_significance>500</concept_significance>
</concept>
</ccs2012>
\end{CCSXML}

\ccsdesc[500]{Information systems~Recommender systems}
\ccsdesc[500]{Theory of computation~Online learning algorithms}
\ccsdesc[500]{Theory of computation~Regret bounds}
\ccsdesc[500]{Computing methodologies~Sequential decision making}

\keywords{Non-stationary Bandit; Recommender Systems; Regret Analysis}

\maketitle

\section{INTRODUCTION}
Multi-armed bandit algorithms provide a principled solution to the explore/exploit dilemma \cite{UCB1,gittins1979bandit,Auer02}, which exists in many important real-world applications such as display advertisement \cite{li2010exploitation}, recommender systems \cite{LinUCB}, and online learning to rank \cite{yue2009interactively}. Intuitively, bandit algorithms adaptively designate a small amount of traffic to collect user feedback in each round while improving their model estimation quality on the fly. In recent years, contextual bandit algorithms \cite{LinUCB,langford2008epoch,filippi2010parametric} have gained increasing attention due to their capability of leveraging contextual information to deliver better personalized online services. They assume the expected reward of each action is determined by a conjecture of unknown bandit parameters and given context, which give them advantages when the space of recommendation is large but the rewards are interrelated.

Most existing stochastic contextual bandit algorithms assume a fixed yet unknown reward mapping function \cite{LinUCB,filippi2010parametric,Li:2016:CFB:2911451.2911548,wu2016contextual,pmlr-v70-gentile17a}. In practice, this translates to the assumption that users' preferences remain static over time. However, this assumption rarely holds in reality as users' preferences can be influenced by various internal or external factors \cite{cialdini1998social}. 
For example, when a sports season ends after a championship, seasonal fans might jump over to following a different sport and not have much interest in the off-season. More importantly, such changes are often not observable to the learners. If a learning algorithm fails to model or recognize the possible changes of the environment, it would constantly make suboptimal choices, e.g., keep making out-of-date recommendations to users.  



 
In this work, moving beyond a restrictive stationary environment assumption, we study a more sophisticate but realistic environment setting where the reward mapping function becomes stochastic over time. More specifically, we focus on the setting where there are abrupt changes in terms of user preferences (e.g., user interest in a recommender system) and those changes are not observable to the learner beforehand. Between consecutive change points, the reward distribution remains stationary yet unknown, i.e., piecewise stationary. Under such a non-stationary environment assumption, we propose a two-level hierarchical bandit algorithm, which  automatically detects and adapts to changes in the environment by maintaining a suite of contextual bandit models during identified stationary periods based on its interactions with the environment.


At the lower level of our hierarchical bandit algorithm, a set of contextual bandit models, referred to as slave bandits, are maintained to estimate the reward distribution in the current environment (i.e., a particular user) since the last detected change point. At the upper level, a master bandit model monitors the `badness' of each slave bandit by examining whether its reward prediction error exceeds its confidence bound. If the environment has not changed, i.e., being stationary since the last change, the probability of observing a large residual from a bandit model learned from that environment is bounded \cite{filippi2010parametric,Improved_Algorithm}. Thus the `badness' of slave bandit models reflects possible changes of the environment. Once a change is detected with high confidence, the master bandit discards the \emph{out-of-date} slave bandits and creates new ones to fit the new environment. Consequentially, the active slave bandit models form an admissible arm set for the master bandit to choose from. At each time, the master bandit algorithm chooses one of the active slave bandits to interact with the user, based on its estimated `badness', and distributes  user feedback to all active slave bandit models attached with this user for model updating. The master bandit model maintains its estimation confidence of the `badness' of those slave bandits so as to recognize the out-of-date ones as early as possible.



We rigorously prove the upper regret bound of our non-stationary contextual bandit algorithm is $O(\Gamma_T \sqrt{S_{\text{max}}} \log S_{\text{max}})$, in which $\Gamma_T$ is the total number of ground-truth environment changes up to time $T$ and $S_{\text{max}}$ is the longest stationary period till time $T$. This arguably is the \emph{lowest} upper regret bound any bandit algorithm can achieve in such a non-stationary environment without further assumptions. Specifically, the best one can do in such an environment is to discard the old model and estimate a new one at each true change point, as no assumption about the change should be made. This leads to the same upper regret bound achieved in our algorithm. However, as the change points are \emph{unknown} to the algorithm ahead of time, any early or late detection of the changes can only result in an increased regret. More importantly, we prove that if an algorithm fails to model the changes a linear regret is inevitable. Extensive empirical evaluations on both a synthetic dataset and three real-world datasets for content recommendation confirmed the improved utility of the proposed algorithm, compared with both state-of-the-art stationary and non-stationary bandit algorithms.

\section{RELATED WORK}



Multi-armed bandit algorithms \cite{UCB1,Gambling,LinUCB,filippi2010parametric, Li:2016:CFB:2911451.2911548, pmlr-v70-gentile17a} have been extensively studied in literature. 
However, most of the stochastic bandit algorithms assume the reward pertaining to an arm is determined by an unknown but \emph{fixed} reward distribution or a context mapping function. This limits the algorithms to a stationary environment assumption, which is restrictive considering the non-stationary nature of many real-world applications of bandit algorithms. 




There are some existing works studying the non-stationary bandit problems. A typical non-stationary environment setting is the abruptly changing environment, or piecewise stationary environment, in which the environment undergoes abrupt changes at unknown time points but remains stationary between two consecutive change points. To deal with such an environment, Hartland et al. \cite{hartland:hal-00113668} proposed the $\gamma-$Restart algorithm, in which a discount factor $\gamma$ is introduced to exponentially decay the effect of past observations. Garivier and Moulines \cite{garivier08_NonStationary} proposed a discounted-UCB algorithm, which is similar to the $\gamma-$Restart algorithm in discounting the historical observations. They also proposed a sliding window UCB algorithm, where only observations inside a sliding window are used to update the bandit model. 
Yu and Mannor \cite{Yu:2009:PBP:1553374.1553524} proposed a windowed mean-shift detection algorithm to detect the potential abrupt changes in the environment. An upper regret bound of $O\big(\Gamma_T \log(T)\big)$ is proved for the proposed algorithm, in which $\Gamma_T$ is the number of ground-truth changes up to time $T$. However, they assume that at each iteration, the agent can query a subset of arms for additional observations. Slivkins and Upfal \cite{adapting-to-a-changing-environment-the-brownian-restless-bandits} considered a continuously changing environment, in which the expected reward of each arm follows Brownian motion. They proposed a UCB-like algorithm, which considers the volatility of each arm in such an environment. The algorithm restarts in a predefined schedule to account for the change of reward distribution. 

Most existing solutions for non-stationary bandit problems focus on context-free scenarios, which cannot utilize the available contextual information for reward modeling. Ghosh et al. proposed an algorithm in \cite{DBLP:journals/corr/GhoshCG17} to deal with environment misspecification in contextual bandit problems. 
Their algorithm comprises a hypothesis test for linearity followed by a decision to use either the learnt linear contextual bandit model or a context-free bandit model. But this algorithm still assumes a stationary environment, i.e., neither the ground-truth linear model nor unknown models are changing over time. Liu et al. \cite{Liu_aaai_2018} proposed to use cumulative sum and Page-Hinkley Test  to detect sudden changes in the environment. An upper regret bound of $O(\sqrt{\Gamma_T T} \log T)$ is proved for one of their proposed algorithms. However, this work is limited to a simplified Bernoulli bandit environment. Recently, Luo et al  \cite{DBLP:journals/corr/abs-1708-01799} studied the non-stationary bandit problem and proposed several bandit algorithms with statistical tests to adapt to changes in the environment. They analyzed various notions of regret including interval regret, switching regret, and dynamic regret.
Hariri et al. \cite{Hariri:2015:AUP:2832747.2832852} proposed a contextual Thompson sampling algorithm with a change detection module, which involves iteratively applying a combination of cumulative sum charts and bootstrapping to capture potential changes of user preference in interactive recommendation. But no theoretical analysis is provided about this proposed algorithm. 

\section{Methodology} \label{sec:method}

We develop a contextual bandit algorithm for a non-stationary environment, where the algorithm automatically detects the changes in the environment and maintains a suite of contextual bandit models for each detected stationary period. 
In the following discussions, we will first describe the notations and our assumptions about the non-stationary environment, then carefully illustrate our developed algorithm and corresponding regret analysis.

\subsection{Problem Setting and Formulation}

In a multi-armed bandit problem, a learner takes turns to interact with the environment, such as a user or a group of users in a recommender system, with a goal of maximizing its accumulated reward collected from the environment over time $T$. At round $t$, the learner makes a choice $a_t$ among a finite, but possibly large, number of arms, i.e.,  $a_t\in\cA=\{a_1, a_2, \dots, a_{K}\}$, and gets the corresponding reward $r_{a_t}$, such as a user clicks on a recommended item. In a contextual bandit setting, each arm $a$ is associated with a feature vector $\bx_a \in \mathbb{R}^d$ ($\lVert \bx_a \rVert_2 \leq 1$ without loss of generality) summarizing the side-information about it at a particular time point. The reward of each arm is assumed to be governed by a conjecture of unknown bandit parameter $\btheta \in \mathbb{R}^d$ ($\lVert {\btheta} \rVert_2 \leq 1$ without loss of generality), which characterizes the environment. 
This can be specified by a reward mapping function $f_{\btheta}$: $r_{a_t} = f_{\btheta}(\bx_{a_t})$. In a stationary environment, $\btheta$ is constant over time. 





In a non-stationary environment, the reward distribution over arms varies over time because of the changes in the environment's bandit parameter $\btheta$. In this paper, we consider abrupt changes in the environment \cite{garivier08_NonStationary,Hariri:2015:AUP:2832747.2832852, hartland:hal-00113668}, i.e., the ground-truth parameter $\btheta$ changes arbitrarily at arbitrary time, but remains constant between any two consecutive change points: 
\begin{equation*}
\!\!\!\!\!\!\underbrace{r_{0}, r_{1}, \!\cdots\!, r_{t_{c_1}-1}}_{ \text{distribute by} ~  f_{\btheta_{c_0}}}, \underbrace{r_{t_{c_1}}, r_{t_{c_1} +1}, \!\cdots\!, r_{t_{c_2}-1}}_{  \text{distribute by} ~ f_{\btheta_{c_1}} }, \!\cdots,\!\underbrace{r_{t_{c_{\Gamma}}}, r_{t_{c_{\Gamma}} +1}, \!\cdots\!, r_{T}}_{ \text{distribute by} ~ f_{\btheta_{c_{\Gamma-1}}} }
\end{equation*}
where the change points $\{t_{c_j}\}_{j=1}^{\Gamma_T -1}$ of the underlying reward distribution and the corresponding bandit parameters $\{\btheta_{c_j}\}_{j=0}^{\Gamma_T -1}$ are unknown to the learner. We only assume there are at most $\Gamma_T -1$ change points in the environment up to time $T$, with $\Gamma_T \ll T$.


To simplify the discussion, linear structure in $f_{\btheta_u}(\bx_{a_t})$ is postulated, but it can be readily extended to more complicated dependency structures, such as generalized linear models \cite{filippi2010parametric}, without changing the design of our algorithm. Specifically, we have,
\begin{equation}
\label{eq:linear_reward}
	r_t =f_{\btheta_t}(\bx_{a_t}) = \bx_{a_t}^\mt \btheta_{t}^* + \eta_t 
\end{equation}
in which $\eta_t$ is Gaussian noise drawn from $N(0,\sigma^2)$, and the superscript $*$ in $\btheta^*_t$ means it is the ground-truth bandit parameter in the environment. In addition, we impose the following assumption about the non-stationary environment, which guarantees the detectability of the changes, and reflects our insight how to detect them on the fly,

\begin{assumption} \label{assumtion:changeAssumption}
 For any two consecutive change points $t_{c_j}$ and $t_{c_{j+1}}$ in the environment, there exists  $\Delta_j >0$, such that when $ t \geq t_{c_{j+1}}$ at least $\rho$ ($ 0 < \rho \leq 1$) portion of all the arms satisfy,
\begin{equation}
\label{eq:assumption}
\lvert \bx_{a_t}^\mt \btheta^*_{t_{c_{j+1}}} - \bx_{a_t}^\mt \btheta^*_{t_{c_{j}}}\rvert > \Delta_j
\end{equation}
\end{assumption}

\begin{remark}
The above assumption is general and mild to satisfy in many practical scenarios, since it only requires a portion of the arms to have recognizable change in their expected rewards. For example, a user may change his/her preference in sports news but not in political news. The arms that do not satisfy Eq \eqref{eq:assumption} can be considered as having small deviations in the generic reward assumption made in Eq \eqref{eq:linear_reward}. We will later prove our  bandit solution remains its regret scaling in the presence of such small deviation. 
\end{remark}


\subsection{Dynamic Linear UCB}

Based on the above assumption about a non-stationary environment, in any stationary period between two consecutive change points, the reward estimation error of a contextual bandit model trained on the observations collected from that period should be bounded with a high probability \cite{Improved_Algorithm,chu2011contextual}. Otherwise, the model's consistent wrong predictions can only come from the change of environment. Based on this insight, we can evaluate whether the stationary assumption holds by monitoring a bandit model's reward prediction quality over time. To reduce variance in the prediction error from one bandit model, we ensemble a set of models, by creating and abandoning them on the fly. 

Specifically, we propose a hierarchical bandit algorithm, in which a master multi-armed bandit model operates over a set of slave contextual bandit models to interact with the changing environment. The master model monitors the slave models' reward estimation error over time, which is referred to as `badness' in this paper, to evaluate whether a slave model is admissible for the current environment. Based on the estimated `badness' of each slave model, the master model dynamically discards \emph{out-of-date} slave models or creates new ones. At each round $t$, the master model selects a slave model with the smallest lower confidence bound (LCB) of `badness' to interact with the environment, i.e., the most promising slave model. The obtained observation $(\bx_{a_t}, r_{a_t})$ is shared across all admissible slave models to update their model parameters. The process is illustrated in Figure \ref{fig:dLinUCB}.

\begin{figure}[t] 
\centering
\includegraphics[width=0.85\linewidth]{./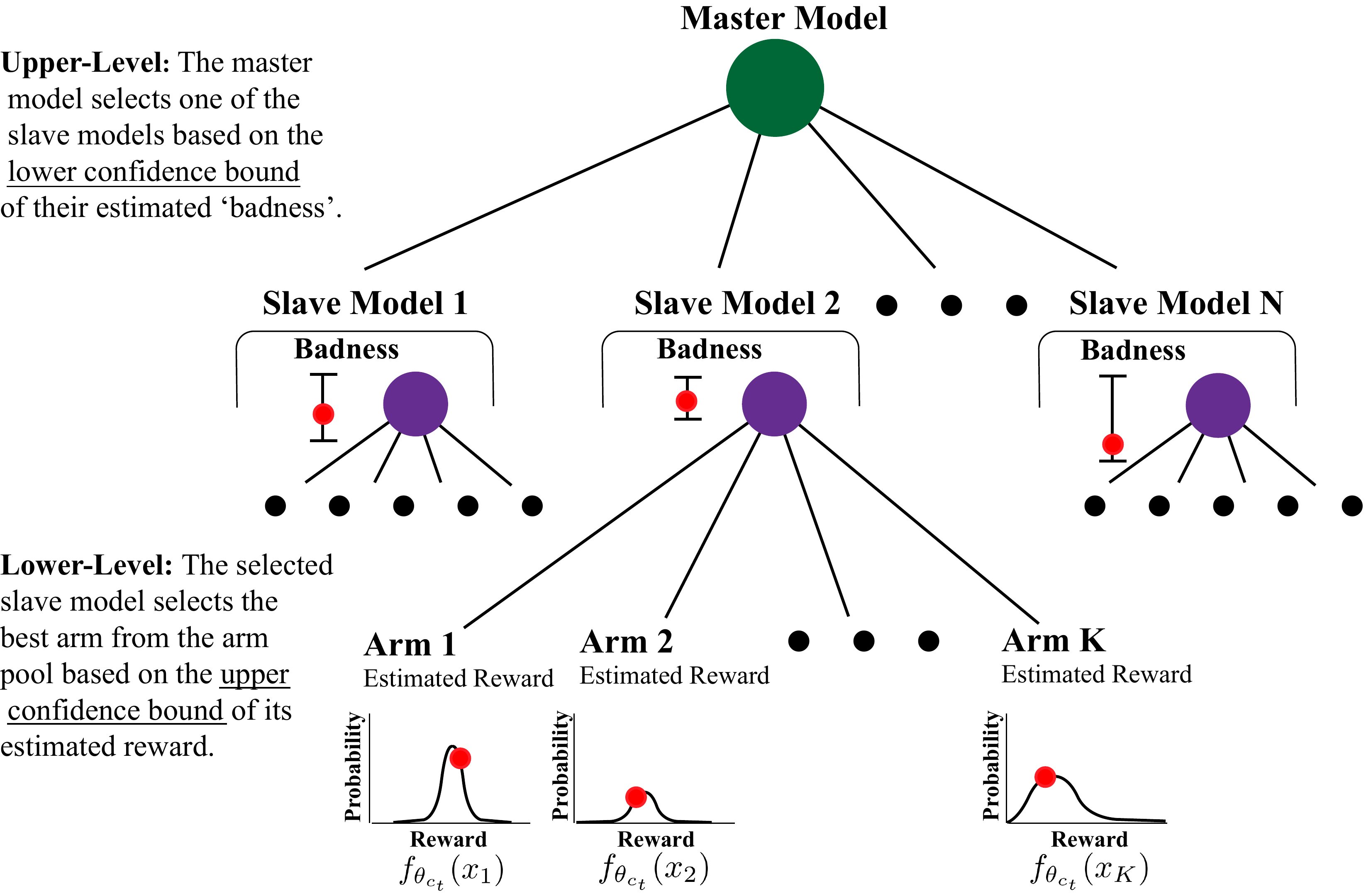}
\caption{Illustration of dLinUCB. The master bandit model maintains the `badness' estimation of slave models over time to detect changes in the environment. At each round, the most promising slave model is chosen to interact with the environment; and the acquired feedback is shared across all admissible slave models for model update.} \label{fig:dLinUCB}
\vspace{-4mm}
\end{figure}


Any contextual bandit algorithm \cite{LinUCB,filippi2010parametric, Li:2016:CFB:2911451.2911548, wu2016contextual} can serve as our slave model. Due to the simplified linear reward assumption made in Eq \eqref{eq:linear_reward}, we choose LinUCB \cite{LinUCB} for the purpose in this paper; but our proposed algorithm can be readily adapted to any other choices of the slave model. This claim is also supported by our later regret analysis.  As a result, we name our algorithm as Dynamic Linear Bandit with Upper Confidence Bound, or dLinUCB in short. 

In the following, we first briefly describe our chosen slave model LinUCB. Then we formally define the concept of `badness', based on which we design the strategy for creating and discarding slave bandit models. Lastly, we explain how dLinUCB selects the most promising slave model from the admissible model set. The detailed description of dLinUCB is provided in Algorithm \ref{alg}.

\noindent\textbf{Slave bandit model: LinUCB.} Each slave LinUCB model maintains all historical observations that the master model has assigned to it. Based on the assigned observations, a slave model $m$ gets an estimate of user preference $\hat \btheta_{t}(m) = \bA_t^{-1}(m) \bb_t(m)$ \cite{LinUCB}, in which $\bA_t(m) = \lambda \bI + \sum_{i \in \cI_{m,t}} \bx_{a_{i}} \bx_{a_{i}}^{\mt} $, $\bI$ is a $d \times d$ identity matrix, $\lambda$ is the coefficient for $L$2 regularization; $\bb_{t}(m) = \sum_{i \in \cI_{m,t}} \bx_{a_{i}} r_{a_{i}}$, and $\cI_{m,t}$ is an index set recording when the observations are assigned to the slave model $m$ up to time $t$. According to \cite{Improved_Algorithm}, with a high probability $1-\delta_1$ the expected reward estimation error of model $m$ is upper bounded: $|\hat r_{a_t}(m) - \bbE[r_{a_t}] |\leq B_t(m, a)$, in which $B_t(m, a) = \big(\sigma^2 \sqrt{d \ln(1+\frac{ |\cI_{m,t}| }{\lambda \delta_1})} + \sqrt{\lambda} \big) \lVert \bx_{a} \rVert_{\bA^{-1}_t(m)}$. Based on the upper confidence bound principle \cite{UCB1}, a slave model $m$ takes an action using the following arm selection strategy (i.e., line 6 in Algorithm \ref{alg}):
\begin{equation} \label{eq:LinUCB_arm_selection}
a_t(m) = \argmax_{a\in\cA} \big( \bx_{a}^{\mt} \hat \btheta_{t}(m) + B_t(m, a) \big)
\end{equation}

\begin{algorithm}[t]
	\caption{Dynamic Linear UCB (dLinUCB)} \label{alg}
	\begin{algorithmic}[1]
		{
			\State \textbf{Inputs:}  $\lambda>0$, $\tau>0$, $\delta_1, \delta_2 \in (0,1)$, $\tilde{\delta}_1 \in [0, \delta_1]$
			\State \textbf{Initialize:} Maintain a set of slave models $\cM_t$ with $\cM_1 = \{ m_1 \}$, initialize $m_1$: $\bA_1(m_1) = \lambda \bI$, $\bb_1(m_1) = \textbf{0}$, $\hat \btheta_1(m_1) = \textbf{0}$; and initialize the `badness' statistics of it: $\hat e_1(m_1) =0$, $d_1(m_1) = 0$
			\For{ $t=1$ to $T$}	    
			\State Choose a slave model from the active slave model set $\tilde m_t = \argmin_{m \in \cM_t} \big( \hat e_t(m) - \sqrt{\ln \tau}\times d_t(m) \big)$
			\State Observe candidate arm pool $\cA_t$, with $\bx_{a}\in \mathbb{R}^d$ for $\forall a\in\cA_t$		
			\State Take action $a_{t}=\argmax_{a\in\cA_t}\big(\bx_{a}^\mt \hat \btheta_t(\tilde m_t)  + B_t(\tilde m_t,a) \big)$, in which $B_t(\tilde m_t,a)$ is defined in Eq \eqref{eq:LinUCB_arm_selection}
			\State Observe payoff $r_{a_{t}}$
			
            \State Set CreatNewFlag = True
            
			\For {$m \in \cM_t$}
			
			\State $e_t(m)=\mathds{1}\{ \lvert \hat r_t(m)-r_t \rvert > B_t(m, a)  + \epsilon$ $ \}$, where $\hat r_t(m) = \bx_{a_t}^\mt \hat \btheta_t(m)$ and $\epsilon =  \sqrt{2} \sigma \text{erf}^{-1}(\delta_1 -1)$
			
			 \If {$ e_t(m) = 0$}
			            \State  Update slave model: $\bA_{t+1}(m) = \bA_t(m) + \bx_{a_t} \bx_{a_t}^{\mt}$,  $\bb_{t+1}(m) = \bb_t(m) + \bx_{a_t}r_{t}$, $\hat \btheta_{t+1} = \bA_{t+1}^{-1}(m)\bb_{t+1}(m)$
			        \EndIf 
			
		 	 \State $\tilde \tau(m) = \text{min}\{ t-t_{m}, \tau \}$, where $t_m$ is when $m$ was created
		 	 \State Update `badness' $\hat e_t(m) = \frac{\sum_{i=t-\tilde \tau}^t e_i(m)}{ \tilde \tau(m)}$, $d_t(m) = \sqrt{\frac{\ln 1/\delta_2}{2\tilde \tau(m)}}$
		 	 
		 	  \If{$\hat e_t(m) < \tilde{\delta_1} + d_t(m)$ }
			    \State Set CreatNewFlag = False
		 	 \ElsIf{$\hat e_t(m) \geq \delta_1  + d_t(m)$}
			\State Discard slave model $m$: $M_{t+1} = M_{t} - m$
			\EndIf  
			\EndFor 
           
        \If {\text{CreateNewFlag} or $\cM_t  = \emptyset $}
			\State Create a new slave model $m_t$:  $\cM_{t+1} = \cM_t + m_t$
			\State Initialize $m_t$: $\bA_t(m_t) = \lambda \bI$, $\bb_t(m_t) = \textbf{0}$, $\hat \btheta_t(m_t) = \textbf{0}$
			\State Initialize `badness' statistics of $m_t$: $\hat e_t(m_t) =0$, $d_t(m_t) = 0$

		\EndIf
			
	\EndFor
		}
	\end{algorithmic}
\end{algorithm}

\noindent\textbf{Slave model creation and abandonment.} For each slave bandit model $m$, we define a binary random variable $e_i(m)$ to indicate whether the slave model $m$'s prediction error at time $i$ exceeds its confidence bound,
\begin{equation}
    e_i(m) \defeq \mathds{1}\big\{ \lvert \hat r_i(m) -r_i(m) \rvert > B_i(m,a_i) + \epsilon \big\}
\end{equation}
where $\epsilon = \sqrt{2}\sigma\text{erf}^{-1}(\delta_1 -1)$ and $\text{erf}^{-1}(\cdot)$ is the inverse of Gauss error function. $\epsilon$ represents the high probability bound of Gaussian noise in the received feedback.

According to Eq \eqref{eq:slave_residual_bound} in Theorem \ref{theorem:slave_CB}, if the environment stays stationary since the slave model $m$ has been created, we have $\bbP(e_{i}(m)=1) \leq \delta_1$, where $\delta_1 \in (0,1)$ is a hyper-parameter in $B_i(m,a)$. Therefore, if we observe a sequence of consistent prediction errors from the slave model $m$, it strongly suggests a change of environment, so that this slave model should be abandoned from the admissible set. Moreover, 
we introduce a size-$\tau$ sliding window to only accumulate the most recent observations when estimating the expected error in slave model $m$. The benefit of sliding window design will be discussed with more details later in Section \ref{sec:regret}. 

We define $\hat e_t(m) \defeq \frac{\sum_{i=t- \tilde \tau(m)}^t e_i(m)}{\tilde \tau(m)}$, which estimates the `badness' of  slave model $m$ within the most recent period $\tilde \tau$ to time $t$, i.e., $\tilde \tau(m) = \text{min}\{ t-t_{m}, \tau \}$, in which $t_m$ is when model $m$ was created. Combining the concentration inequality in Theorem \ref{theorem:chernoffBound} (provided in the appendix), we have the assertion that if in the period $[t-\tilde \tau(m), t]$ the stationary hypothesis is true, for any given $\delta_1 \in (0,1)$ and $\delta_2 \in (0,1)$, with a probability at least $1-\delta_2$, the expected `badness' of slave model $m$ satisfies,
\small
\begin{equation} \label{eq:accumulated_model_error}
\!\!\!\hat e_t(m) \leq \bbE[e_t(m)] + \sqrt{ \frac{\ln (1/\delta_2) }{2 \tilde \tau(m) } }  \leq \delta_1 +  \sqrt{ \frac{\ln (1/\delta_2)}{2\tilde \tau(m)} }
\end{equation}
\normalsize

Eq \eqref{eq:accumulated_model_error} provides a tight bound to detect changes in the environment. If the environment is unchanged, within a sliding window the estimation error made by an up-to-date slave model should not exceed the right-hand side of Eq \eqref{eq:accumulated_model_error} with a high probability. Otherwise, the stationary hypothesis has to be rejected and thus the slave model $m$ should be discarded. Accordingly, if none of the slave models in the admissible bandit set satisfy this condition, a new slave bandit model should be created for this new environment. Specifically, the master bandit model controls the slave model creation and abandonment in the following way.

\noindent$\bullet$ \textit{Model abandonment}: when the  slave model $m$'s estimated `badness' exceeds its upper confidence bound defined in Eq \eqref{eq:accumulated_model_error}, i.e., $\hat e_t(m) > \delta_1 + \sqrt{ \frac{\ln (1/\delta_2)}{2  \tilde \tau(m)} }$, it will be discarded and removed from the admissible slave model set. This corresponds to line 18-20 in Algorithm \ref{alg}.

\noindent$\bullet$ \textit{Model creation}: When no slave model's estimated `badness' is within its expected confidence bound, i.e., no slave model satisfies $\hat e_t(m) \leq \tilde{\delta_1} + \sqrt{ \frac{\ln (1/\delta_2)}{2 \tilde \tau(m)} }$, a new slave model will be created. $\tilde{\delta_1} \in [0, \delta_1]$ is a parameter to control the sensitivity of dLinUCB, which affects the number of maintained slave models. When $\tilde{\delta_1} = \delta_1$, the threshold of creating and abandoning a slave model matches and the algorithm only maintains one admissible slave model. When $\tilde{\delta_1} < \delta_1$ multiple slave models will be maintained. The intuition is that an environment change is very likely to happen when all active slave models face a high risk of being out-of-date (although they have not been abandoned yet). This corresponds to line 8, 16-17, and 22-26 in Algorithm \ref{alg}.


\noindent\textbf{Slave model selection and update.} At each round, the master bandit model selects one active slave bandit model to interact with the environment, and updates all active slave models with the acquired feedback accordingly. As we mentioned before, with the model abandonment mechanism every active slave model is guaranteed to be admissible for taking acceptable actions; but they are associated with different levels of risk of being out of date. A well-designed model selection strategy can further reduce the overall regret, by minimizing this risk. Intuitively, when facing a changing environment, one should prefer a slave model with the lowest empirical error in the most recent period.   


The uncertainty in assessing each slave model's `badness' introduces another explore-exploit dilemma, when choosing the active slave models. Essentially, we prefer a slave model of lower `badness' with a higher confidence. We realize this criterion by selecting a slave model according to its Lower Confidence Bound (LCB) of the estimated `badness.' This corresponds to line 4 in Algorithm \ref{alg}.


Once the feedback $(\bx_{a_t}, r_t)$ is obtained from the environment on the selected arm $a_t$, the master algorithm can not only update the selected slave model but also all other active ones for both of their `badness' estimation and model parameters (line 11-13 and line 15 in Algorithm \ref{alg} accordingly). This would reduce the sample complexity in each slave model's estimation. However, at this stage, it is important to differentiate those ``about to be out-of-date'' models from the ``up-to-date'' ones, as any unnecessary model update blurs the boundary between them. As a result, we only update the \emph{perfect} slave models, i.e., those whose `badness' is still zero at this round of interaction; and later we will prove this updating strategy is helpful to decrease the chance of late detection.




\subsection{Regret Analysis}
\label{sec:regret}
In this section, we provide a detailed regret analysis of our proposed dLinUCB algorithm. We focus on the accumulated pseudo regret, which is formally defined as,
\begin{equation} \label{eq:regretDefinition}
  \textbf{R}(T) = \sum_{t=1}^T \big(\bbE[r_{a_t^*}] -\bbE[r_{a_t}]\big)
\end{equation}
where $a_t^*$ is the best arm to select according to the oracle of this problem, and $a_t$ is the arm selected by the algorithm to be evaluated. 

It is easy to prove that if a bandit algorithm does not model the change of environment, it would suffer from a linearly increasing regret:  An optimal arm in the previous stationary period may become sub-optimal  after the change; but the algorithm that does not model environment change will constantly choose this sub-optimal arm until its estimated reward falls behind the other arms'. This leads to a linearly increasing regret in each new stationary period.

Next, we first characterize the confidence bound of reward estimation in a linear bandit model in Theorem \ref{theorem:slave_CB}. Then we prove the upper regret bound of two variants of our dLinUCB algorithm in Theorem \ref{theorem:regretBoundOnly1} and Theorem \ref{theorem:regret_bound}. More detailed proofs are provided in the appendix. 

\begin{theorem} \label{theorem:slave_CB}
 For a linear bandit model $m$ specified in Algorithm \ref{alg}, if the underlying environment is stationary, for any $\delta_1 \in (0,1)$ we have the following inequality with probability at least $1-\delta_1$, 	
	\begin{equation} \label{eq:slave_residual_bound}
	\lvert \hat r_t(m) -  r_t \rvert     \leq B_t(m,a) + \epsilon 
	\end{equation}
	where $B_t(m,a) =\alpha_t \lVert \bx_{a_t} \rVert_{\bA_{t-1}^{-1}} $ with $\alpha_t =  \big(\sigma^2 \sqrt{d \ln(1+\frac{ |\cI_t(m)| }{\lambda \delta_1})} + \sqrt{\lambda} \big)$,  $\epsilon = \sqrt{2}\sigma \text{erf}^{-1}(\delta_1 -1)$, $\sigma$ is the standard deviation of the Gaussian noise in reward feedback, and $\text{erf}(\cdot)$ is the Gauss error function.
\end{theorem}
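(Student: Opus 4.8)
The plan is to split the prediction residual $\hat r_t(m)-r_t$ into an estimation-error part and a noise part and control each with high probability. Under the stationarity assumption, every observation the master has routed to slave $m$ up to round $t$ was generated by one and the same ground-truth parameter $\btheta^*$ with $\lVert\btheta^*\rVert_2\le 1$, so writing $r_t=\bx_{a_t}^\mt\btheta^*+\eta_t$ gives
\begin{equation*}
\hat r_t(m)-r_t \;=\; \bx_{a_t}^\mt\big(\hat\btheta_t(m)-\btheta^*\big)\;-\;\eta_t ,
\end{equation*}
so by the triangle inequality it is enough to bound $\lvert\bx_{a_t}^\mt(\hat\btheta_t(m)-\btheta^*)\rvert$ by $B_t(m,a)$ with high probability, to bound $\lvert\eta_t\rvert$ by $\epsilon$ with high probability, and to combine the two by a union bound.

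For the estimation term I would use Cauchy--Schwarz in the $\bA_{t-1}$-weighted geometry, $\lvert\bx_{a_t}^\mt(\hat\btheta_t(m)-\btheta^*)\rvert\le\lVert\bx_{a_t}\rVert_{\bA_{t-1}^{-1}}\,\lVert\hat\btheta_t(m)-\btheta^*\rVert_{\bA_{t-1}}$, and then invoke the self-normalized confidence-ellipsoid inequality for regularized least squares (the vector-valued martingale tail bound of Abbasi-Yadkori et al., cited in the excerpt as \cite{Improved_Algorithm}): with probability at least $1-\delta_1$, simultaneously over all $t$,
\begin{equation*}
\lVert\hat\btheta_t(m)-\btheta^*\rVert_{\bA_{t-1}}\;\le\;\sigma\sqrt{2\ln\!\big(\det(\bA_{t-1})^{1/2}\det(\lambda\bI)^{-1/2}/\delta_1\big)}\;+\;\sqrt{\lambda}\,\lVert\btheta^*\rVert_2 .
\end{equation*}
Applying the log-determinant bound $\det(\bA_{t-1})/\det(\lambda\bI)\le(1+\lvert\cI_t(m)\rvert/(\lambda d))^d$ and $\lVert\btheta^*\rVert_2\le 1$, the right-hand side collapses to the stated $\alpha_t$ (up to routine recombining of the logarithmic terms), which is exactly $B_t(m,a)/\lVert\bx_{a_t}\rVert_{\bA_{t-1}^{-1}}$. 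The point that needs care is that $\cI_t(m)$ is itself a random, data-dependent index set --- it records the rounds in which the master chose to update slave $m$ --- but since that decision at round $i$ is measurable with respect to the history through round $i$, the sequence $(\bx_{a_i},\eta_i)_{i\in\cI_t(m)}$ still has the predictable-regressor, conditionally-Gaussian structure the martingale inequality requires, so the ellipsoid bound transfers verbatim.

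For the noise term, $\eta_t\sim N(0,\sigma^2)$ gives $\bbP(\lvert\eta_t\rvert\le\epsilon)=\text{erf}\big(\epsilon/(\sqrt{2}\sigma)\big)$, and the definition $\epsilon=\sqrt{2}\,\sigma\,\text{erf}^{-1}(\delta_1-1)$ encodes (up to the obvious sign) the two-sided $(1-\delta_1)$-deviation level of the noise, so $\bbP(\lvert\eta_t\rvert\le\epsilon)=1-\delta_1$. A union bound over these two high-probability events --- splitting the confidence budget between them, or absorbing the resulting constant factor into the logarithmic term --- then yields $\lvert\hat r_t(m)-r_t\rvert\le B_t(m,a)+\epsilon$ on an event of probability at least $1-\delta_1$. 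The main obstacle is the estimation step: making the confidence ellipsoid for $\hat\btheta_t(m)$ valid despite the adaptive choice of which rounds feed slave $m$; once that martingale argument is in place, the remaining pieces --- Cauchy--Schwarz, the log-determinant lemma, and the Gaussian tail bound --- are routine.
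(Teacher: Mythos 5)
Your proposal follows essentially the same route as the paper, whose proof is only a one-line sketch citing Theorem 2 of Abbasi-Yadkori et al.\ for the estimation term plus the Gaussian concentration for the noise term; your decomposition into the confidence-ellipsoid bound (via Cauchy--Schwarz in the $\bA_{t-1}$-weighted norm) and the $\mathrm{erf}$-based tail bound on $\eta_t$, combined by a union bound, is exactly that argument spelled out. Your added care about the adaptively chosen index set $\cI_t(m)$ and the confidence-budget split in the union bound goes beyond what the paper records and is correct.
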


Denote $R_{\text{Lin}}(S)$ as the upper regret bound of a linear bandit model within a stationary period $S$. Based on Theorem \ref{theorem:slave_CB}, one can prove that $R_{\text{Lin}}(S) \leq \sqrt{dS\log (\lambda+\frac{S}{d})} \Big(  \sigma^2 \sqrt{d\log (1+ \frac{S}{\lambda \delta_1}}) + \sqrt{\lambda} \Big)  $  \cite{Improved_Algorithm}. In the following, we provide an upper regret bound analysis for the basic version of dLinUCB, in which the size of the admissible slave models is restricted to one (i.e., by setting $\tilde{\delta_1} = \delta_1$).  

\begin{theorem} \label{theorem:regretBoundOnly1}
When Assumption \ref{assumtion:changeAssumption} is satisfied with $\Delta \geq 2 \sqrt{\lambda} + 2\epsilon$, if $\delta_1$ and $\tau$ in Algorithm \ref{alg} are set according to Lemma \ref{lemma:late_detection}, and $\delta_2$ is set to $\delta_2 \leq \frac{1}{2S_{\text{max}}}$, with probability at least $(1-\delta_1)(1-\delta_2)(1-\frac{\delta_2}{1- \delta_2} )$, the accumulated regret of dLinUCB satisfies,
\begin{equation}
\label{eq:regret-one-slave}
\textbf{R}(T) \leq 2 \Gamma_T  R_{\text{Lin}}(S_{\text{max}}) + \Gamma_T (\tau + \frac{4}{1-\delta_2})
\end{equation}
where $S_{\text{max}}$ is the length of the longest stationary period up to $T$.
\end{theorem}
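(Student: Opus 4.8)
The plan is to decompose the pseudo-regret over the $\Gamma_T$ true stationary segments and, within each segment, separate the \emph{transition} regret (incurred while a stale slave model is still in charge) from the \emph{learning} regret (incurred by the fresh slave model eventually created for that segment). Throughout, I would condition on the intersection of the two good events provided by our existing guarantees: (i) the reward-estimation bound of Theorem~\ref{theorem:slave_CB}, $|\hat r_t(m)-r_t|\le B_t(m,a)+\epsilon$, holding (uniformly in $t$, using the sliding-window reset so each window is a fresh confidence event) for the relevant slave models, which costs the $1-\delta_1$ factor; and (ii) the badness-concentration bound of Eq.~\eqref{eq:accumulated_model_error}, $\hat e_t(m)\le \bbE[e_t(m)]+\sqrt{\ln(1/\delta_2)/(2\tilde\tau(m))}$, holding over each model's lifetime, which — after a union bound over the sliding windows using $\delta_2\le \tfrac1{2S_{\text{max}}}$ — costs the remaining $(1-\delta_2)(1-\tfrac{\delta_2}{1-\delta_2})$ factors in the statement. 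On these events two structural facts hold: an up-to-date slave model has true badness $\le\delta_1$, hence $\hat e_t(m)\le\delta_1+d_t(m)$ always, so (since $\tilde\delta_1=\delta_1$ in this basic version) it is never discarded and never triggers \textsf{CreateNewFlag}; conversely, a stale model is discarded once its windowed badness has been inflated by post-change errors.

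Next I would analyze a single stationary segment $j$, say $[t_{c_j},t_{c_{j+1}})$ with length $S_j\le S_{\text{max}}$. At $t_{c_j}$ the only active model is the one, $m_{j-1}$, created during segment $j-1$; it stays the single active model until a change is detected, whereupon $\cM_t$ empties and a fresh $m_j$ is created (lines 22--26). For the transition phase I would invoke Assumption~\ref{assumtion:changeAssumption} together with $\Delta\ge 2\sqrt\lambda+2\epsilon$ and the calibration of $\delta_1,\tau$ from Lemma~\ref{lemma:late_detection}: for the $\rho$-fraction of arms with large deviation, a reverse-triangle-inequality argument combined with the reward-estimation guarantee for $m_{j-1}$ gives $|\hat r_t(m_{j-1})-r_t|>B_t(m_{j-1},a)+\epsilon$, i.e.\ $e_t(m_{j-1})=1$, whenever $m_{j-1}$ plays such an arm; on rounds where it instead plays a non-deviating arm its prediction is accurate, it keeps being updated, and it behaves exactly like a LinUCB run restricted to those arms, so the regret it accrues before it is forced onto the deviating arms is at most $R_{\text{Lin}}(S_{\text{max}})$. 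Once the badness starts accumulating, Eq.~\eqref{eq:accumulated_model_error} with $\delta_2\le\tfrac1{2S_{\text{max}}}$ and a full window force $\hat e_t(m_{j-1})>\delta_1+d_t(m_{j-1})$ within at most $\tau+\tfrac{4}{1-\delta_2}$ further rounds, each costing $O(1)$ regret — this is precisely where Lemma~\ref{lemma:late_detection} does the work. Hence the transition regret of segment $j$ is at most $R_{\text{Lin}}(S_{\text{max}})+\tau+\tfrac4{1-\delta_2}$.

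After detection, the fresh model $m_j$ starts from scratch on a stationary environment of length at most $S_{\text{max}}$; being up to date it is never discarded on the good event, and Theorem~\ref{theorem:slave_CB} together with the quoted bound $R_{\text{Lin}}(S)\le\sqrt{dS\log(\lambda+S/d)}\big(\sigma^2\sqrt{d\log(1+S/(\lambda\delta_1))}+\sqrt\lambda\big)$ bounds its regret by $R_{\text{Lin}}(S_{\text{max}})$. Summing the transition and learning contributions gives per-segment regret $\le 2R_{\text{Lin}}(S_{\text{max}})+\tau+\tfrac4{1-\delta_2}$; summing over the $\Gamma_T$ segments (the first segment has no predecessor model, so its transition term is $0$, which only helps) yields $\textbf{R}(T)\le 2\Gamma_T R_{\text{Lin}}(S_{\text{max}})+\Gamma_T(\tau+\tfrac4{1-\delta_2})$, as claimed.

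The step I expect to be the crux is the transition-phase bound, i.e.\ Lemma~\ref{lemma:late_detection}: one must simultaneously (a) rule out that the stale model evades detection forever by always selecting arms outside the $\rho$-fraction guaranteed by Assumption~\ref{assumtion:changeAssumption} — handled by the observation that such behaviour coincides with $m_{j-1}$ acting as a genuine, low-regret LinUCB on the new environment, so it cannot both persist and accumulate large regret, and one must also absorb the subtlety that $m_{j-1}$ may hold a few post-change observations so its estimation target is not exactly $\btheta^*_{t_{c_j}}$ — and (b) pin down how $\delta_1$, $\tau$, $\Delta$ and the noise level $\epsilon$ must be jointly calibrated so that one full window of post-change interactions reliably pushes $\hat e_t(m_{j-1})$ above the abandonment threshold $\delta_1+d_t(m_{j-1})$ within $\tau+O(1/(1-\delta_2))$ rounds. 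Everything else is bookkeeping: the union bound over good events and the accounting of $O(1)$ per-round regret constants into the $\tfrac4{1-\delta_2}$ slack.
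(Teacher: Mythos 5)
Your decomposition is genuinely different from the paper's, and the difference hides a gap. The paper's proof splits the regret into three pieces: (i) a ``perfect detection'' baseline $\sum_j R_{\text{Lin}}(S_{c_j})$, (ii) the cost of \emph{early detections} (false alarms), and (iii) the cost of \emph{late detections}. The second $R_{\text{Lin}}(S_{\text{max}})$ factor in Eq.~\eqref{eq:regret-one-slave} is entirely the price of (ii): by Lemma~\ref{lemma:early_detection} the number of false alarms in a segment of length $S_{c_j}$ is binomial with success probability $p_e\le\delta_2$, Chebyshev gives $k_{c_j}\le 2S_{c_j}\delta_2$ with the stated probability, and each false alarm restarts a slave model at a cost of up to another $R_{\text{Lin}}$; the choice $\delta_2\le\frac{1}{2S_{\text{max}}}$ is exactly what keeps this total at $\Gamma_T R_{\text{Lin}}(S_{\text{max}})$. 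You instead assert that on your good event an up-to-date model is \emph{never} discarded, ``after a union bound over the sliding windows using $\delta_2\le\frac{1}{2S_{\text{max}}}$.'' That union bound does not close: the per-round failure probability of Eq.~\eqref{eq:accumulated_model_error} is $\delta_2$, so over a segment of length up to $S_{\text{max}}$ the union bound yields failure probability up to $S_{\text{max}}\delta_2=\tfrac12$ per segment (and roughly $\Gamma_T/2$ over the horizon), which is vacuous. False alarms genuinely occur and must be charged for; the paper's Step~2 is not optional bookkeeping.

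Your factor of $2$ comes from somewhere else entirely: you credit the stale model with an extra $R_{\text{Lin}}(S_{\text{max}})$ of ``LinUCB-like'' regret during the transition phase while it plays non-deviating arms. The paper makes no such argument: its late-detection term charges the \emph{entire} pre-detection phase at $O(1)$ regret per round for at most $\tau+\frac{4}{1-\delta_2}$ rounds (a Geometric tail for the detection time via Lemma~\ref{lemma:late_detection}, plus the window delay), with no $R_{\text{Lin}}$ contribution. Your transition argument is also not sound as stated: the stale model's parameter was fit to $\btheta^*_{c_{j-1}}$ and is frozen on rounds where $e_t=1$, so playing arms outside the $\rho$-fraction does not make it ``a genuine low-regret LinUCB on the new environment'' --- the new optimum may itself be a deviating arm, and each such round can cost $\Theta(1)$ regret regardless of how well the stale model predicts the non-deviating arms. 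The upshot is that the two $R_{\text{Lin}}$ terms in your bound and in the paper's bound account for different events; if one both charged for false alarms (as one must) and kept your transition term, the leading constant would be $3$, not $2$. To repair the argument, drop the transition-phase $R_{\text{Lin}}$, take Lemma~\ref{lemma:late_detection} as bounding the detection delay to $\tau+O(1/(1-\delta_2))$ rounds at regret $2$ each, and add the binomial/Chebyshev analysis of false alarms to recover the second $R_{\text{Lin}}(S_{\text{max}})$ factor.
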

\begin{proof}

\textbf{Step 1:} If change points can be perfectly detected, the regret of dLinUCB can be bounded by $ \sum_{j=0}^{\Gamma_T-1}  R_{\text{Lin}}(S_{c_j}) $. However, additional regret may accumulate if early or late detection happens. In the following two steps, we will bound the possible additional regret from early detection, denoted as $R_{\text{early}}$, and that from late detection, denoted as $R_{\text{late}}$. 

\textbf{Step 2:} 
Define $k_{c_j}$ as the number of early detection within this stationary period $[t_{c_{j}}, t_{c_{j+1}}]$, with $S_{c_j} = t_{c_{j+1}} - t_{c_j} $.  
Define $p_e$ as the probability of early detection in the stationary period, we have $\bbP(k_{c_j} = k)= \binom{S_{c_j}}{k} p_e^{k}(1-p_e)^{S_{c_j}-k}$. According to Lemma \ref{lemma:early_detection}, we have $p_e \leq \delta_2$. Combining the property of binomial distribution $B(S_{c_j}, p_e)$ and Chebyshev's concentration inequality, we have $k_{c_j} \leq 2S_{c_{j}} \delta_2 $ with probability at least $1-\frac{1-\delta_2 }{2S_{c_j} \delta_2 }$. Hence, with a probability $(1-\delta_3)\times(1-\delta_1)^{k_{S_{\text{max}}}}$, we have
$R_{\text{early}} \leq \sum_{j=1}^{\Gamma_T} k_{c_{j}} R_{\text{Lin}}(\frac{s_{c_j}}{k_{c_j}}) \leq \sum_{j=1}^{\Gamma_T} 2S_{c_{j}} \delta_2 R_{\text{Lin}}(\frac{1}{2 \delta_2}) \leq 2\delta_2 \Gamma_T S_{T} R_{\text{Lin}}(\frac{1}{2  \delta_2})$. Considering the calculation of $R_{\text{Lin}}$, when $\delta_2 \leq  \frac{1}{2S_{\text{max}}}$ we have $ R_{\text{early}} \leq  \Gamma_T R_{\text{Lin}}(S_{\text{max}})$ with a probability at least $(1-\delta_2)(1-\delta_1)$. This upper bounds the additional regret from any possible early detection, and maintains it in the same order as the slave model's. 

\textbf{Step 3}. 
Define $\tilde k_{c_j}$ as the number of interactions where the environment has changed (comparing to $\btheta^*_{c_j}$) but the change is not detected by the algorithm. The additional regret from this late detection can be bounded by $2\tilde k_{c_j}$ (i.e., the maximum regret in each round of interaction). Define $p_d$ as the probability of detection after the change happens, we have $\bbP(\tilde k_{c_j} = k) = (1-p_d)^{k-1}p_d$, i.e., a Geometric distribution. According to Lemma \ref{lemma:late_detection}, $p_d \geq 1-\delta_2$. Based on the property of Geometric distribution $G(p_d)$ and Chebyshev's inequality, we have $\tilde k_{c_j} \leq \frac{2}{1-\delta_2}$ with probability $1-\frac{\delta_2}{1- \delta_2}$. If we consider the case where the change point locates inside the sliding window $\tau$, we may have at most another $\tau$ delay after each change point. Therefore, the additional regret from late detection can be bounded by $R_{\text{late}} \leq \Gamma_T \big(\tau + \frac{4}{1-\delta_2}  \big)$, which is not directly related to the length of any stationary period. 

Combining the above three steps concludes the proof. 
\end{proof}

\begin{lemma}[Bound the probability of early detection] \label{lemma:early_detection}
For $\delta_2 \in (0,1)$ and any slave model in Algorithm \ref{alg},
\begin{equation*}
   p_e = \bbP[\hat e_t(m) > \delta_1 + d_t(m) | \text{stationary in past} ~ \tilde{\tau}(m) \text{ rounds}] \leq \delta_2.
\end{equation*}
\end{lemma}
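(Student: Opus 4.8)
The plan is to recognize this lemma as the rigorous version of the concentration statement already sketched around Eq~\eqref{eq:accumulated_model_error}. First I would unpack the two quantities appearing in the event: $\hat e_t(m) = \frac{1}{\tilde\tau(m)}\sum_{i=t-\tilde\tau(m)}^{t} e_i(m)$ is an empirical average of binary indicators $e_i(m)\in\{0,1\}$, while $d_t(m) = \sqrt{\ln(1/\delta_2)/(2\tilde\tau(m))}$ is precisely the Hoeffding deviation term at confidence level $\delta_2$ over $\tilde\tau(m)$ samples. Conditioning on the stationary hypothesis holding throughout the window $[t-\tilde\tau(m),\,t]$, Theorem~\ref{theorem:slave_CB} guarantees that at each round $i$ in that window the prediction error $\lvert \hat r_i(m)-r_i\rvert$ exceeds $B_i(m,a_i)+\epsilon$ with probability at most $\delta_1$; and since $m$ is only updated on rounds with $e_i(m)=0$, Theorem~\ref{theorem:slave_CB} keeps applying to $m$'s current parameters at each such round, so $\bbE[e_i(m)\mid\mathcal F_{i-1}]\le\delta_1$ and hence $\bbE[\hat e_t(m)]\le\delta_1$.

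Next I would invoke the Chernoff/Hoeffding inequality (Theorem~\ref{theorem:chernoffBound} in the appendix) for the bounded variables $e_i(m)$: $\bbP\big[\hat e_t(m)-\bbE[\hat e_t(m)]\ge s\big]\le \exp(-2\tilde\tau(m)s^2)$. Setting $s=d_t(m)$ makes the right-hand side equal to $\exp(-\ln(1/\delta_2))=\delta_2$. Finally, because $\bbE[\hat e_t(m)]\le\delta_1$, the event $\{\hat e_t(m)>\delta_1+d_t(m)\}$ is contained in $\{\hat e_t(m)-\bbE[\hat e_t(m)]>d_t(m)\}$, and therefore $p_e\le\delta_2$, which is exactly the claim.

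The one delicate point — and the step I expect to require the most care — is the dependence structure of the $e_i(m)$. They are generated adaptively: the arm $a_i$ played, the feedback $r_i$, and even whether $m$ is selected at round $i$ all depend on the past, so the $e_i(m)$ are not i.i.d.\ Bernoulli variables. The clean fix is to run the argument on the natural filtration $\{\mathcal F_i\}$ and use the conditional bound $\bbP(e_i(m)=1\mid\mathcal F_{i-1})\le\delta_1$ on the stationarity event, then apply a martingale (Azuma--Hoeffding) form of the concentration inequality to the centered partial sums $\sum_i\big(e_i(m)-\bbE[e_i(m)\mid\mathcal F_{i-1}]\big)$ rather than to $\sum_i\big(e_i(m)-\delta_1\big)$ directly; dominating each $\bbE[e_i(m)\mid\mathcal F_{i-1}]$ by $\delta_1$ only lowers the threshold we must clear, which only helps. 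A minor bookkeeping remark is that $\tilde\tau(m)=\min\{t-t_m,\tau\}$ is deterministic once $t$ and the creation time $t_m$ are fixed, so conditioning on the window treats it as a constant and no union bound over possible window lengths is needed at this stage.
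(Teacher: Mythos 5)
Your proposal matches the paper's own proof in substance: the paper likewise uses the stationarity hypothesis together with Theorem~\ref{theorem:slave_CB} to get $\bbE[e_i(m)]\le\delta_1$ over the window, and then applies the Chernoff/Hoeffding bound of Theorem~\ref{theorem:chernoffBound} with deviation $d_t(m)=\sqrt{\ln(1/\delta_2)/(2\tilde\tau(m))}$ to conclude $p_e\le\delta_2$. Your extra martingale (Azuma--Hoeffding) handling of the adaptive dependence among the $e_i(m)$ is a careful refinement that the paper's one-line proof glosses over, but it is the same argument, not a different route.
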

The intuition behind Lemma \ref{lemma:early_detection} is that when the environment is stationary, the `badness' of a slave model $m$ should be small and bounded according to Eq \eqref{eq:accumulated_model_error}. 

\begin{lemma}[Bound the probability of late detection] \label{lemma:late_detection}
When the magnitude of change in the environment in Assumption \ref{assumtion:changeAssumption} satisfies $\Delta > 2\sqrt{\lambda} + 2\epsilon $, and the shortest stationary period length $S_{min}$ satisfies $S_{\text{min}} > \frac{\sqrt{\lambda}}{2\rho}(\Delta - 2\sqrt{\lambda} - 2\epsilon )$, for any $\delta_2 \in (0,1)$, if $\delta_1$ and $\tau$ in Algorithm \ref{alg} are set to $\delta_1 \leq 1- \frac{1}{\rho}\big(1-\frac{\sqrt{\lambda}}{2S_{\text{min}}\rho}(\Delta - 2\sqrt{\lambda} - 2\epsilon )\big)$ and $\tau \geq \frac{2\ln \frac{2}{\delta_2 }}{ (\rho (1-\delta_1) - \delta_1)^2}$, for any slave model $m$ in Algorithm \ref{alg}, we have,
\begin{equation*}
p_d = \bbP\big(\hat e_t(m) > \delta_1 + d_t(m)|  \text{ changed within past} ~\tilde{\tau}(m)\big) \geq 1- \delta_2.
\end{equation*}
\end{lemma}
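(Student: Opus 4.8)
The plan is to lower-bound the badness estimate $\hat e_t(m)$ after a change by showing that the slave model, having been fit to the \emph{old} parameter $\btheta^*_{c_j}$, incurs a prediction error on each of the $\rho$-portion of "changed" arms that exceeds its own confidence radius $B_t(m,a)+\epsilon$, so that $\bbE[e_t(m)]$ is bounded below by roughly $\rho$, and then invoke the same Chernoff concentration used in Eq \eqref{eq:accumulated_model_error} in the reverse direction. First I would fix a window $[t-\tilde\tau(m),t]$ lying entirely after the change point $t_{c_{j+1}}$ and analyze a single round $i$ in which the played arm satisfies Assumption \ref{assumtion:changeAssumption}. By the triangle inequality, $|\hat r_i(m)-r_i| \ge |\bx_{a_i}^\mt\btheta^*_{c_{j+1}}-\bx_{a_i}^\mt\btheta^*_{c_j}| - |\bx_{a_i}^\mt(\hat\btheta_i(m)-\btheta^*_{c_j})| - |\eta_i|$. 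Assumption \ref{assumtion:changeAssumption} makes the first term exceed $\Delta_j$. The middle term is the \emph{in-period} estimation error of a LinUCB model fit purely to the old environment; since the model has been updated only on "perfect" rounds, by Theorem \ref{theorem:slave_CB} it is at most $B_i(m,a_i)\le \sqrt\lambda\,\|\bx_{a_i}\|_{\bA_i^{-1}(m)} + (\text{the }\sigma^2\sqrt{d\ln(\cdot)}\text{ term})$, which I would crudely bound by $\sqrt\lambda$ (using $\|\bx_a\|_{\bA^{-1}}\le 1/\sqrt\lambda$ and absorbing the log term — this is exactly why the condition $S_{\min}>\tfrac{\sqrt\lambda}{2\rho}(\Delta-2\sqrt\lambda-2\epsilon)$ on the window size appears, guaranteeing enough samples so that term is small). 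The noise term is at most $\epsilon$ with probability $\ge 1-\delta_1$ per the definition of $\epsilon$. Hence on such a round, when $\Delta_j>2\sqrt\lambda+2\epsilon$, we get $|\hat r_i(m)-r_i| > \sqrt\lambda + \epsilon \ge B_i(m,a_i)+\epsilon$, i.e.\ $e_i(m)=1$.

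The next step is to control the \emph{frequency} of such rounds inside the window. Assumption \ref{assumtion:changeAssumption} only guarantees a $\rho$-fraction of arms are "recognizable," and the master picks arms by its own UCB rule, so I would argue that in each round at least a $\rho$ chance (over the arm pool / selection) the played arm is a changed one, giving $\bbE[e_i(m)] \ge \rho(1-\delta_1)$ conditioned on the change; more carefully, one shows $\bbE[e_t(m)]\ge \rho(1-\delta_1)$ for every post-change round in the window. Then $\bbE[\hat e_t(m)] = \frac{1}{\tilde\tau}\sum_i \bbE[e_i(m)] \ge \rho(1-\delta_1)$. Applying Theorem \ref{theorem:chernoffBound} (the appendix concentration bound) to the Bernoulli sequence $\{e_i(m)\}$ gives $\hat e_t(m) \ge \bbE[\hat e_t(m)] - \sqrt{\tfrac{\ln(1/\delta_2)}{2\tilde\tau(m)}} \ge \rho(1-\delta_1) - d_t(m)$ with probability $\ge 1-\delta_2$. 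To conclude $\hat e_t(m) > \delta_1 + d_t(m)$ it then suffices that $\rho(1-\delta_1) - d_t(m) \ge \delta_1 + d_t(m)$, i.e.\ $\rho(1-\delta_1)-\delta_1 \ge 2d_t(m) = 2\sqrt{\ln(1/\delta_2)/(2\tilde\tau(m))}$, which rearranges to $\tilde\tau(m) \ge \tfrac{2\ln(1/\delta_2)}{(\rho(1-\delta_1)-\delta_1)^2}$; comparing with the stated $\tau \ge \tfrac{2\ln(2/\delta_2)}{(\rho(1-\delta_1)-\delta_1)^2}$ and noting $\tilde\tau(m)=\min\{t-t_m,\tau\}$ reaches $\tau$ once enough post-change rounds elapse, this gives the claim (the slight $\ln 2$ slack in the hypothesis presumably absorbs a union bound between the concentration event and the per-round noise event).

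I would organize the write-up as: (i) per-round error lower bound via triangle inequality plus Theorem \ref{theorem:slave_CB} and the noise bound; (ii) translate the window-length condition on $S_{\min}$ into "the old-model estimation error is $\le\sqrt\lambda$," and the condition on $\delta_1$ into "$\rho(1-\delta_1)-\delta_1>0$ with the right margin"; (iii) expectation lower bound $\bbE[\hat e_t(m)]\ge\rho(1-\delta_1)$; (iv) reverse Chernoff to get the high-probability lower bound; (v) algebra matching $\tau$. The main obstacle I anticipate is step (ii)–(iii): rigorously arguing that a $\rho$-\emph{fraction of arms} satisfying the gap condition translates into the \emph{expected per-round indicator} being at least $\rho(1-\delta_1)$, given that the master's arm-selection rule is adversarially coupled to $\hat\btheta_i(m)$ and could in principle systematically avoid the recognizable arms; handling this likely needs either an averaging argument over the candidate pool $\cA_t$ or a mild additional assumption that recognizable arms are not starved, together with carefully bounding the stale-model estimation error uniformly over the window — that uniform control, and its dependence on having accumulated $\Omega(S_{\min})$ clean samples before the change, is where the delicate bookkeeping lives.
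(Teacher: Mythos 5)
Your proposal follows essentially the same route as the paper's proof: decompose the post-change residual into the stale model's estimation error with respect to the old parameter (controlled by a confidence bound) plus the change gap (controlled by Assumption \ref{assumtion:changeAssumption}), deduce the per-round bound $\bbP(e_i(m)=1)\ge\rho(1-\delta_1)$, and apply Theorem \ref{theorem:chernoffBound} in reverse together with the stated condition on $\tau$ to force $\hat e_t(m)>\delta_1+d_t(m)$. The only substantive differences are that the paper controls the stale model's error via the contaminated-LinUCB bound $\tilde B_t(m,a)$ of Theorem \ref{theorem:LinUCBConfidenceBound_withcontamination} rather than the clean bound of Theorem \ref{theorem:slave_CB}, and that the obstacle you flag in step (ii)--(iii) --- turning a $\rho$-fraction of changed arms into a $\rho(1-\delta_1)$ lower bound on the per-round indicator despite the arm-selection rule --- is asserted without further justification in the paper as well.
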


The intuition behind Lemma \ref{lemma:late_detection} is that when the environment has changed, with a high probability that Eq \eqref{eq:slave_residual_bound} will not be satisfied in an out-of-date slave model. It means that we will accumulate larger badness from this slave model. In both Lemma \ref{lemma:early_detection} and \ref{lemma:late_detection}, $\delta_2$ is a parameter controlling the confidence of the `badenss' estimation in Chernoff Bound; and therefore an input to the algorithm.

\begin{remark}[How the environment assumption affects dLinUCB]
1. The magnitude of environment change $\Delta$ affects whether a change is detectable by our algorithm. 
However, we need to emphasize that when $\Delta$ is very small, the additional regret from re-using an out-of-date slave model is also small. In this case, a similar scale of regret bound can still be achieved, which will be briefly proved in Appendix and empirically studied in \emph{Section \ref{exp_simulation}}. 2. We require the shortest stationary period length  $S_{\text{min}} > \max\{ \frac{\sqrt{\lambda}}{2\rho}(\Delta - 2\sqrt{\lambda} - 2\epsilon ), \tau \}$, which guarantees there are enough observations accumulated in a slave model to make an informed model selection. 3. The portion of changed arms $\rho$ will affect the probability of achieving our derived regret bound, as we require  $\delta_1 \leq 1- \frac{1}{\rho}\big(1-\frac{\sqrt{\lambda}}{2S_{\text{min}}\rho}(\Delta - 2\sqrt{\lambda} - 2\epsilon )\big)$. $\rho$ also interacts with $S_{\text{min}}$ and $\tau$: when $\rho$ is small, more observations are needed for a slave model to detect the changes. The effect of $\rho$ and $S_{\text{min}}$ will also be studied in our empirical evaluations.
\end{remark}

Theorem \ref{theorem:regretBoundOnly1} indicates with our model update and abandonment mechanism, each slave model in dLinUCB is `admissible' in terms of upper regret bound. In the following, we further prove that maintaining multiple slave models and selecting them according to their LCB of `badness' can further improve the regret bound.


\begin{theorem} \label{theorem:regret_bound}
Under the same condition as specified in Theorem \ref{theorem:regretBoundOnly1}, with probability at least $(1-\delta_1)(1-\delta_2)(1-\frac{\delta_2}{1- \delta_2} )$, the expected accumulated regret of dLinUCB up to time $T$ can be bounded by,
\begin{align} \label{eq:final_regret}
\textbf{R}(T) \leq & \big (\sum_{j=0}^{\Gamma_T -1} R_{\text{Lin}}(S_{c_j}) + 2 \Gamma_T ( \tau + \frac{4}{1-\delta_2})  \\ \nonumber
& + \sum_{j=0}^{\Gamma_T-1} \big (8 \sum_{m \in \cM, m \neq m_{{c_j}}^*} \frac{\ln S_{c_j}}{g_{m,m^*_{c_j}}} + (1+\frac{\pi^2}{3}) g_{ m,m_{c_j}^*} \big)
\end{align} 
in which $m^*_{c_j}$ is the best slave model among all the active ones in the stationary period $[t_{c_j}, t_{c_{j+1}}]$ according to the oracle, and $g_{m,m_{c_j}^*}$ is difference between the accumulated expected reward from the selected model $m$ and that from $m_{c_j}^*$ in the period $[t_{c_j}, t_{c_{j+1}}]$ .
\end{theorem}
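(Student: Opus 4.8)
The plan is to decompose $\textbf{R}(T)$ over the $\Gamma_T$ true stationary periods into three contributions and bound each in turn, reusing the machinery of Theorem~\ref{theorem:regretBoundOnly1} for the first two and invoking a UCB-style argument at the master level for the third. First I would condition on the intersection of the high-probability events behind Theorem~\ref{theorem:slave_CB}, Lemma~\ref{lemma:early_detection} and Lemma~\ref{lemma:late_detection}; after a union bound over the periods together with the choice $\delta_2 \le \frac{1}{2S_{\text{max}}}$ these combine to the stated probability $(1-\delta_1)(1-\delta_2)(1-\frac{\delta_2}{1-\delta_2})$. On this event, inside each true stationary window $[t_{c_j}, t_{c_{j+1}})$ every non-abandoned slave model has reward-prediction error controlled by $B_t(m,a)+\epsilon$, so an oracle that always played the \emph{best active} slave model would accumulate at most $R_{\text{Lin}}(S_{c_j})$ there; summing over $j$ yields the first term $\sum_{j=0}^{\Gamma_T-1} R_{\text{Lin}}(S_{c_j})$.

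For the second term I would repeat Step~3 of the proof of Theorem~\ref{theorem:regretBoundOnly1}: the number of post-change rounds before the outdated slave models are discarded is geometric with detection probability $p_d \ge 1-\delta_2$ (Lemma~\ref{lemma:late_detection}), plus at most $\tau$ rounds of sliding-window lag per change point, each such round costing at most $2$ in pseudo-regret. Accounting additionally for the short warm-up the master needs before it can reliably prefer a freshly created model after a change, this gives $2\Gamma_T(\tau + \frac{4}{1-\delta_2})$. Note that, unlike in Theorem~\ref{theorem:regretBoundOnly1}, there is no separate early-detection term: because $\tilde\delta_1 < \delta_1$ several slave models are kept alive, so a prematurely abandoned good model is immediately replaced and re-enters the master's competition, and its cost is absorbed into the third term.

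The heart of the proof is the third term, which quantifies what the master loses by not always selecting the per-period optimal slave model $m^*_{c_j}$. The master picks $\tilde m_t = \argmin_{m\in\cM_t}\big(\hat e_t(m) - \sqrt{\ln\tau}\, d_t(m)\big)$, i.e.\ it runs UCB on the complementary statistic $1-e_t(m)$ with the Chernoff radius $d_t(m) = \sqrt{\ln(1/\delta_2)/(2\tilde\tau(m))}$ from Eq~\eqref{eq:accumulated_model_error}, whose validity on the conditioning event follows from Theorem~\ref{theorem:chernoffBound}. I would then run the classical UCB argument \cite{Auer02}: a suboptimal slave model $m$ is chosen only while its optimistic badness still undercuts that of $m^*_{c_j}$, which happens at most $O(\ln S_{c_j}/\delta_m^2)$ times, where $\delta_m$ is the gap in expected badness between $m$ and $m^*_{c_j}$; the residual-probability tail contributes the $(1+\frac{\pi^2}{3})$ factor through $\sum_t t^{-2}$. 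Multiplying the selection count by the per-period reward gap $g_{m,m^*_{c_j}}$ and summing over $m\ne m^*_{c_j}$ and over $j$ produces $\sum_{j}\big(8\sum_{m\ne m^*_{c_j}}\ln S_{c_j}/g_{m,m^*_{c_j}} + (1+\frac{\pi^2}{3})g_{m,m^*_{c_j}}\big)$.

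The main obstacle I anticipate is the coupling between the quantity the master actually optimizes — empirical badness $\hat e_t(m)$ — and the quantity we must ultimately charge — cumulative reward regret $g_{m,m^*_{c_j}}$: a UCB guarantee controls the number of pulls in terms of the gap in the optimized statistic, not in a merely correlated one. I would resolve this by showing that on the good event a slave model with a large cumulative reward gap must exhibit a proportionally large expected badness (an admissible-but-worse model triggers $e_i(m)=1$ often enough, via Theorem~\ref{theorem:slave_CB} and Assumption~\ref{assumtion:changeAssumption}), so that $\delta_m$ is lower-bounded in terms of $g_{m,m^*_{c_j}}$ and the substitution into the UCB count is legitimate. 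Two secondary points also need care: the sliding window means $\hat e_t(m)$ aggregates only the last $\tilde\tau(m)$ rounds, so the effective horizon in the UCB bound is a window rather than the whole period — this is exactly why $\sqrt{\ln\tau}$ appears in the selection rule and why the remark requires $S_{\text{min}} > \tau$ so that enough samples are available for an informed choice; and the fact that only \emph{perfect} slave models (those with current badness zero) are updated means each slave's reward estimator is built from a subsampled stream, which must be shown not to inflate $R_{\text{Lin}}(S_{c_j})$ beyond the form used in the first term.
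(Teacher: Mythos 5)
Your proposal follows essentially the same route as the paper: the same per-period decomposition into (true oracle vs.\ best active slave model $m^*_{c_j}$) plus (best active slave vs.\ the master's actual choice), with the first part inherited from Theorem~\ref{theorem:regretBoundOnly1} and the second bounded by a UCB1-style count of suboptimal slave selections, $\bbE[\tilde N_{c_j}(m)] \leq 8\ln S_{c_j}/g_{m,m^*_{c_j}}^2 + 1 + \pi^2/3$, exactly as in Lemma~\ref{lemma:N_bound}. The coupling issue you flag --- that the master's UCB argument runs on gaps in expected badness while the regret is charged in reward gaps $g_{m,m^*_{c_j}}$ --- is genuine, but the paper's Lemma~\ref{lemma:N_bound} simply identifies the two gaps without the bridging argument you propose, so your treatment is, if anything, the more careful one.
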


\begin{proof}
Define the optimal expected cumulative reward in the stationary period $[t_{c_j}, t_{c_j+1}]$ according to the oracle as $G^*(S_{c_j})$ and the expected accumulative reward in dLinUCB as $G(S_{c_j})$. $G_{m_{c_j}^*}(S_{c_j})$ is the expected cumulative reward from $m_{c_j}^*$.  
The accumulated regret of dLinUCB can be written as,
\begin{equation} 
\label{eq:total_regret}
\textbf{R}(T) = \sum_{j=0}^{\Gamma_T -1} \big( G^*(S_{c_j}) - G_{m_{c_j}^*}(S_{c_j})\big) + \big( G_{m_{c_j}^*}(S_{c_j})- G(S_{c_j}) \big)
\end{equation}
The first term of Eq\eqref{eq:total_regret} can be bounded based on Theorem \ref{theorem:regretBoundOnly1}. 
Define $\tilde N_{c_j}(m)$ as the number of times a slave model $m$ is selected when it is not the best in $[t_{c_j}, t_{c_j+1}]$: $\tilde N_{c_j}(m) = \sum_{t={t_{c_j}}}^{t_{c_j+1}} \mathds{1}\{\tilde m_t = m, m_{c_j}^* \neq m\}$, we have $G_{m_{c_j}^*}(S_{c_j})- G(S_{c_j}) \leq \sum_{m \in \cM } g_{m, m^*_{c_j}}\bbE[\tilde N_{c_j}(m)]$.
In Lemma \ref{lemma:N_bound}, we provide the bound of $\bbE[\tilde N_{c_j}(m)]$. Substituting the above conclusions into Eq \eqref{eq:total_regret} finishes the proof.
\end{proof}

\begin{lemma} \label{lemma:N_bound}
The model selection strategy in Algorithm \ref{alg} guarantees,
\begin{equation}
\bbE[\tilde N_{c_j}(m)] \leq \frac{8\ln S_{c_j}}{g_{m,m^*_{c_j}}^2} + 1 + \frac{\pi^2}{3}
\end{equation}
\end{lemma}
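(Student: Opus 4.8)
The proof will follow the classical finite-time analysis of UCB1 (Auer, Cesa-Bianchi, and Fischer), with the master bandit playing the role of the UCB learner and each active slave model $m\in\cM$ playing the role of an arm whose per-round signal is the badness indicator $e_t(m)\in\{0,1\}$. Within the stationary period $[t_{c_j},t_{c_{j+1}}]$ every active slave is up-to-date, so the conditional expectation $\mu_m\defeq\bbE[e_t(m)]$ is well-defined and controlled through the slave's confidence-bound guarantee (Theorem \ref{theorem:slave_CB}); moreover the per-round reward loss $g_{m,m^*_{c_j}}$ of running slave $m$ instead of $m^*_{c_j}$ is, up to absolute constants, the badness gap $\mu_m-\mu_{m^*_{c_j}}$, because an admissible slave's predicted rewards lie inside its confidence band. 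Hence it suffices to bound, by a UCB argument on the badness signal, the number of rounds in which a slave with $\mu_m>\mu_{m^*_{c_j}}$ is selected.

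Concretely, I would fix an integer $\ell$ and write $\tilde N_{c_j}(m)\le \ell+\sum_{t}\mathds{1}\{\tilde m_t=m,\ \tilde\tau(m)\ge\ell\}$. The event $\{\tilde m_t=m,\ m^*_{c_j}\neq m\}$ forces $\hat e_t(m)-\sqrt{\ln\tau}\,d_t(m)\le \hat e_t(m^*_{c_j})-\sqrt{\ln\tau}\,d_t(m^*_{c_j})$, which in turn forces at least one of three sub-events: (i) $\hat e_t(m^*_{c_j})>\mu_{m^*_{c_j}}+\sqrt{\ln\tau}\,d_t(m^*_{c_j})$, i.e. the good slave's badness is over-estimated; (ii) $\hat e_t(m)<\mu_m-\sqrt{\ln\tau}\,d_t(m)$, i.e. the bad slave's badness is under-estimated; or (iii) $\mu_m-\mu_{m^*_{c_j}}<2\sqrt{\ln\tau}\,d_t(m)$, i.e. the confidence radius still swamps the gap. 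Recalling $d_t(m)=\sqrt{\ln(1/\delta_2)/(2\tilde\tau(m))}$, choosing $\ell=\lceil 8\ln S_{c_j}/g_{m,m^*_{c_j}}^2\rceil$ makes (iii) impossible once $\tilde\tau(m)\ge\ell$, after using $\delta_2\le 1/(2S_{\text{max}})$ (so $\ln(1/\delta_2)\ge\ln S_{c_j}$) and the lower bound on $\tau$ from Lemma \ref{lemma:late_detection} to calibrate the radius. The probabilities of (i) and (ii) are controlled by the Chernoff-type concentration inequality of Theorem \ref{theorem:chernoffBound} applied to the window average; a union bound over rounds and over the $O(t^2)$ admissible pairs of window lengths, with the radius calibrated so each term fails with probability $O(t^{-4})$, produces the convergent series $2\sum_{t}t^{-2}=\pi^2/3$. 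Adding the deterministic term $\ell$ and the single ``initialization'' round on which $m$ first becomes available yields $\bbE[\tilde N_{c_j}(m)]\le 8\ln S_{c_j}/g_{m,m^*_{c_j}}^2+1+\pi^2/3$.

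The step I expect to be the main obstacle is that this deviates from textbook UCB1 in one structural way: because every active slave receives a badness sample on every round (the inner per-model loop of Algorithm \ref{alg}), the confidence radius $\sqrt{\ln\tau}\,d_t(m)$ depends on the sliding-window length $\tilde\tau(m)=\min\{t-t_m,\tau\}$ rather than on the number of times $m$ has been \emph{selected}. I would argue that this only helps: within a stationary period $\tilde\tau(m)$ is non-decreasing and is eventually pinned at $\tau$, and since the badness samples are collected on every round the window average concentrates at least as fast as in the classical setting, so the three-way split and the choice of $\ell$ above still go through provided the radius dominates the standard $\sqrt{2\ln S_{c_j}/\tilde\tau(m)}$ term, which the factor $\sqrt{\ln\tau}$ together with $\ln(1/\delta_2)\ge\ln(2S_{\text{max}})$ guarantees under the hypotheses of Theorem \ref{theorem:regretBoundOnly1}. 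A secondary technical point is making the reduction from the reward gap $g_{m,m^*_{c_j}}$ to the badness gap $\mu_m-\mu_{m^*_{c_j}}$ precise; I would obtain it from Theorem \ref{theorem:slave_CB}, since on any round where slave $m$'s arm choice is $g$-suboptimal, $m$'s prediction error on that arm must exceed its confidence bound by a $\Theta(g)$ amount, hence $\mu_m$ exceeds $\mu_{m^*_{c_j}}$ proportionally, with the constants absorbed into the statement.
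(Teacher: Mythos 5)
Your proof follows essentially the same route as the paper's: the classical UCB1 finite-time analysis with the decomposition $\tilde N_{c_j}(m)\le \ell+\sum_t\mathds{1}\{\cdot\}$, the same three-way case split (over-estimation of $m^*_{c_j}$'s badness, under-estimation of $m$'s badness, gap dominated by the confidence radius), the same choice $\ell=\lceil 8\ln S_{c_j}/g_{m,m^*_{c_j}}^2\rceil$, and the same $t^{-4}$ union bound producing the $1+\pi^2/3$ term. If anything you are more explicit than the paper, which silently identifies the reward gap $g_{m,m^*_{c_j}}$ with the badness gap in its third case and never discusses the window-length versus selection-count distinction that you flag.
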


\begin{remark}[regret comparison of dLinUCB with one slave model and multiple slave models] 
By maintaining multiple admissible slave models and selecting one according to the LCB of `badness' when interacting with the environment, dLinUCB achieves a regret reduction in the first part of Eq \eqref{eq:final_regret}. Although there is additional regret introduced by switching between the best model $m^*$ and the chosen model $\tilde m$, this added regret increases much slower than that resulted from any slave model (i.e., $\ln T$ v.s., $R_{\text{Lin}}(T)$); and thus maintaining multiple slave models is always beneficial. Besides, the order of upper regret bound of dLinUCB in both cases is $O(\Gamma_T \sqrt{S_{\text{max}}} \log S_{\text{max}})$, which is the best upper regret bound a bandit algorithm can achieve in such a non-stationary environment \cite{garivier08_NonStationary}, and it matches the lower bound up to a $\Gamma_T \log S_{\text{max}}$ factor.
\end{remark}

\begin{remark}[Generalization of dLinUCB]
Our theoretical analysis confirms that any contextual bandit algorithm can be used as the slave model in dLinUCB, as long as the its reward estimation error is bounded with a high probability, which corresponds $B_t(m, a)$ in Eq \eqref{eq:slave_residual_bound}. The overall regret of dLinUCB will only be a factor of the actual number of changes in the environment, which is arguably inevitable without further assumptions about the environment. 
\end{remark}

\section{Evaluations}

\begin{table*}[t]
\centering
\caption{Accumulated regret with different noise level $\sigma$, environment change $\Delta$ and stationary period length $S$.}  \label{tab:sim_table}
\vspace{-3mm}
\begin{tabular}{|ccccccc|} \hline
($\sigma$, $\Delta$, $S$)& (0.1, 0.9, 800) & (0.05, 0.9, 800) & (0.01, 0.9, 800) &(0.01, 0.5, 800) &  (0.01, 0.1, 800) & (0.01, 0.9, 400) \\
 \hline
 \textbf{dLinUCB }& \textbf{ 87.46 $\pm$ 3.61 } & \textbf{65.94$\pm$ 2.30} & \textbf{54.07$\pm$ 3.95} & \textbf{44.94 $\pm$ 2.90} & \textbf{46.12 $\pm$ 4.63}  & \textbf{111.72 $\pm$ 4.87 } \\  \
adTS & 360.75$\pm$ 39.59 & 249.63 $\pm$ 27.26 & 207.95$\pm$ 22.28 & 189.07$\pm$18.39 & 177.55.$\pm$20.36 & 412.55 $\pm$ 14.53 \\ 
LinUCB & 436.84 $\pm$ 40.23 & 386.10$\pm$21.88 &  347.19$\pm$ 14.95 & 264.87$\pm$ 21.53 & 226.87$\pm$ 32.15 & 405.82 $\pm$ 33.38    \\ 
Meta-Bandit & 1822.31$\pm$ 80.67 & 1340.01$\pm$29.94 & 1354.03 $\pm$ 22.29 & 1329.51 $\pm$ 18.93 & 1402.63 $\pm$ 24.85 & 1388.81 $\pm$ 115.91  \\ 
WMDUCB1 & { 2219.36$\pm$ 142.16  }& {1652.99$\pm$ 21.33} & 1635.35 $\pm$ 73.96 & 1464.11 $\pm$ 89.16 & 1506.55 $\pm$ 41.52 & 1691.75 $\pm$ 48.09 \\ 
\hline
\end{tabular}
\end{table*}

We performed extensive empirical evaluations of dLinUCB against several related baseline algorithms, including: 1) the state-of-the-art contextual bandit algorithm LinUCB \cite{LinUCB}; 2) adaptive Thompson Sampling algorithm \cite{Hariri:2015:AUP:2832747.2832852} (named as adTS)  which has a change detection module; 3) windowed mean-shift detection algorithm \cite{Yu:2009:PBP:1553374.1553524} (named as WMDUCB1), which is a UCB1-type algorithm with a change detection module ; and 4) Meta-Bandit algorithm \cite{hartland:hal-00113668}, which switches between two UCB1 models. 

\subsection{Experiments on synthetic datasets}
\label{exp_simulation}
In simulation, we generate a size-$K$ ($K = 1000$) arm pool $\cA$, in which each arm $a$ is associated with a $d$-dimensional feature vector $\bx_a \in \mathbb{R}^d$ with $\lVert \bx_{a}\rVert_2 \leq 1$. Similarly, we create the ground-truth bandit parameter $\btheta^* \in \mathbb{R}^d$ with $\lVert \btheta^* \rVert_2 \leq 1$, which is not disclosed to the learners. Each dimension of $\bx_a$ and $\btheta^*$ is drawn from a uniform distribution $U(0,1)$. 
At each round $t$, only a subset of arms in $\cA$ are disclosed to the learner for selection, e.g., randomly sample 10 arms from $\cA$ without replacement. 
The ground-truth reward $r_{a}$ is corrupted by Gaussian noise $\eta \sim N(0,\sigma^2)$ before being fed back to the learner. The standard deviation of Gaussian noise $\sigma$ is set to 0.05 by default. To make the comparison fair, 
at each round $t$, the same set of arms are presented to all the algorithms being evaluated. To simulate an abruptly changing environment, after every $S$ rounds, we randomize $\btheta^*$ with respect to the constraint $|\bx_{a}^{\mt} \btheta^*_{t_{c_j}} - \bx_{a}^{\mt} \btheta^*_{t_{c_{j+1}}}| > \Delta_j $ for $\rho$ proportion of arms in $\cA$. We set $\lambda$ to 0.1, $S$ to 800 and $\Delta$ to $0.9$ by default.

Under this simulation setting, all algorithms are executed to 5000 iterations and the parameter $\tau$ in dLinUCB is set to 200. Accumulated regret defined in Eq \eqref{eq:regretDefinition} is used to evaluate different algorithms and is reported in Figure \ref{fig:simulation}. The bad performance of LinUCB illustrates the necessity of modeling the non-stationarity of the environment -- its regret only converges in the first stationary period, and it suffers from an almost linearly increasing regret, which is expected according to our theoretical analysis in Section \ref{sec:regret}. adTS is able to detect and react to the changes in the environment, but it is slow in doing so and therefore suffers from a linear regret at the beginning of each stationary period before converging. dLinUCB, on the other hand, can quickly identify the changes and create corresponding slave models to capture the new reward distributions, which makes the regret of dLinUCB converge much faster in each detected stationary period. In Figure \ref{fig:simulation} we use the black and blue vertical lines to indicate the actual change points and the detected ones by dLinUCB respectively. It is clear that dLinUCB detects the changes almost immediately every time. WMDUCB1 and Meta-Bandit are also compared, but since they are context-free bandits, they performed much worse than the above contextual bandits. To improve visibility of the result, we exclude them from Figure \ref{fig:simulation} and instead report their performance in Table \ref{tab:sim_table}.

As proved in our regret analysis, dLinUCB's performance depends the magnitude of change $\Delta$ between two consecutive stationary periods, the Gaussian noise $\sigma$ in the feedback, and the length $S$ of stationary period. In order to investigate how these factors affect dLinUCB, we varied these three factors in simulation. We ran all the algorithms for 10 times and report the mean and standard deviation of obtained regret in Table \ref{tab:sim_table}. In all of our environment settings, dLinUCB consistently achieved the best performance against all baselines. In particular, we can notice that the length $S$ of stationary period plays an important role in affecting dLinUCB's regret (and also in adTS). This is expected from our regret analysis: since $T$ is fixed, a smaller $S$ leads to a larger $\Gamma_T$, which linearly scales dLinUCB's regret in Eq \eqref{eq:regret-one-slave} and \eqref{eq:final_regret}. A smaller noise level $\sigma$ leads to reduced regret in dLinUCB, as it makes the change detection easier. 
Last but not least, the magnitude of change $\Delta$ does not affect dLinUCB: when $\Delta$ is large, the change is easy to detect; when $\Delta$ is small, the difference between two consecutive reward distributions is small, and thus the added regret from an out-of-date slave model is also small. Again the context-free algorithms WMDUCB1 and Meta-Bandit performed much worse than those contextual bandit algorithms in all the experiments. 

In addition, we also studied the effect of $\rho$ in dLinUCB by varying $\rho$ from 0.0 to 1.0. dLinUCB achieved the lowest regret when $\rho = 0$, since the environment becomes stationary. When $\rho > 0$: dLinUCB achieves the best regret (with regret of 54.07 $\pm $ 3.95) when $\rho = 1.0$, however as $\rho$ becomes smaller the regret is not affected too much (with regret of 57.59 $\pm$ 3.44). These results further validate our theoretical regret analysis and unveil the nature of dLinUCB in a piecewise stationary environment.


\begin{figure}[t] 
\centering
\includegraphics[width=0.92\linewidth]{./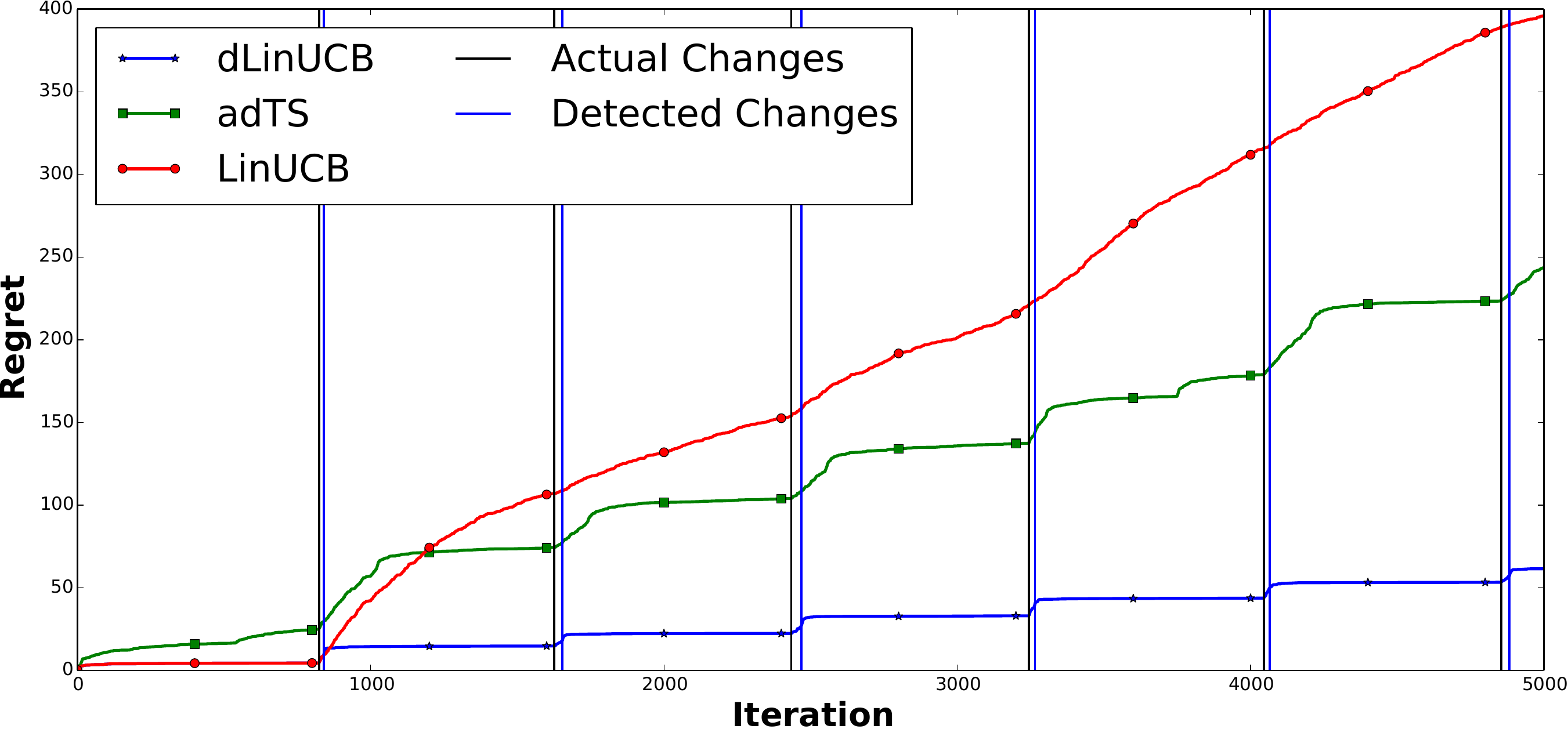}
\vspace{-3mm}
\caption{Results from simulation.} \label{fig:simulation}
\vspace{-4mm}
\end{figure}

\subsection{Experiments on Yahoo! Today Module}
\begin{figure*}[t!h]
\centering
\begin{tabular}{c c c}
\includegraphics[width=5.4cm]{./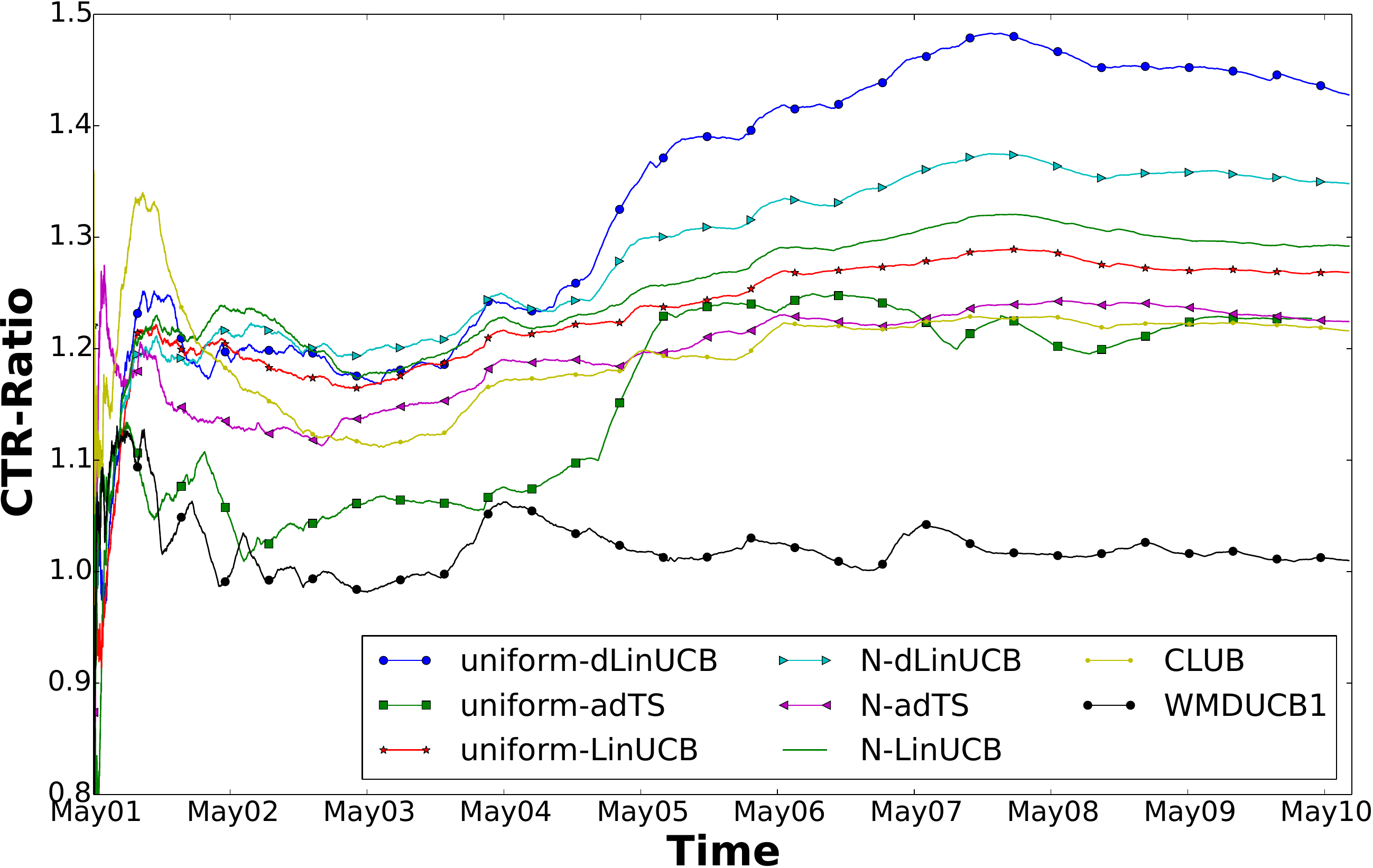} 
 & \includegraphics[width=5.3cm]{./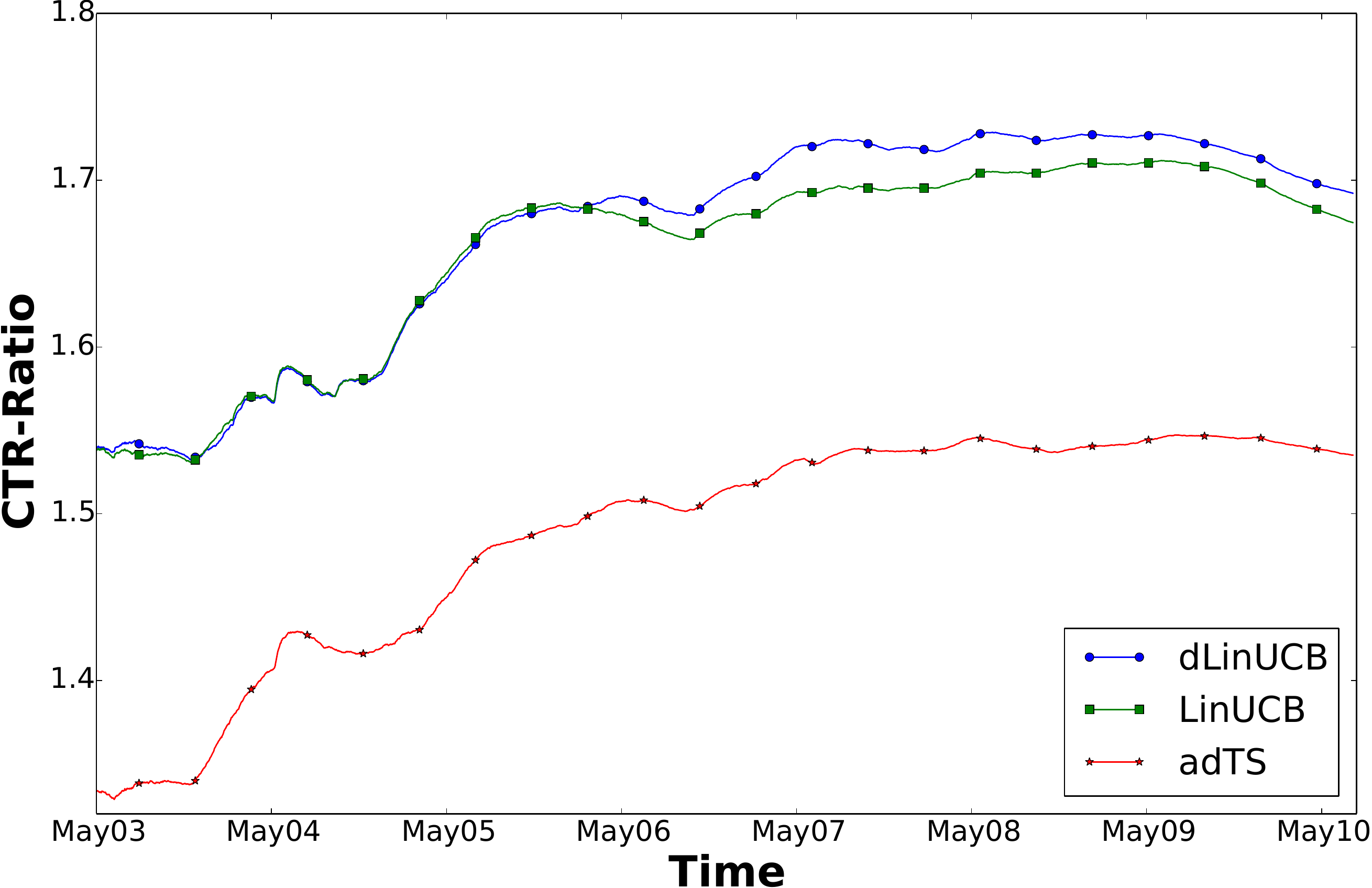}
 &
 \includegraphics[width=5.4cm]{./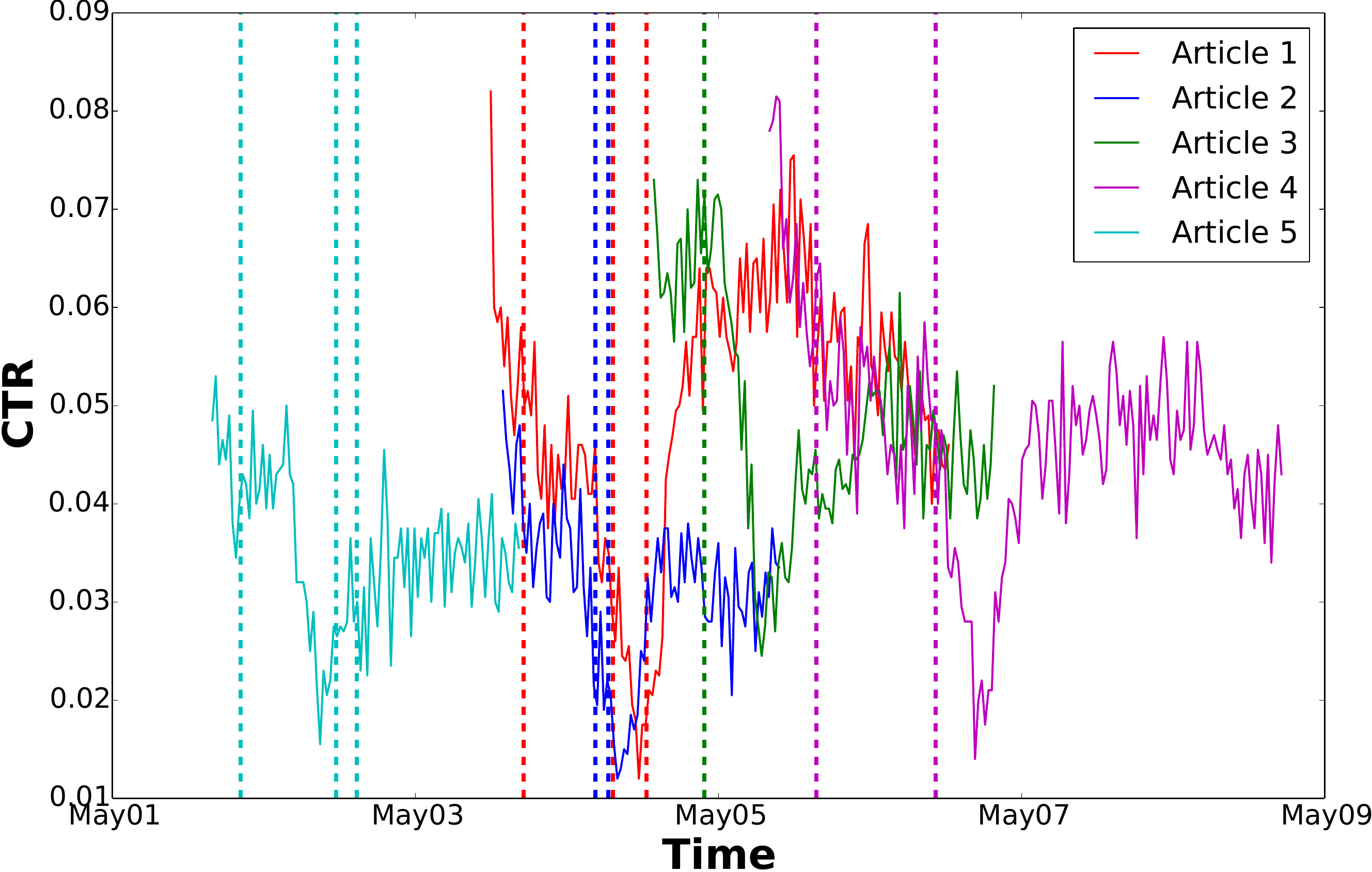}\\
(a) Bandit models on the user side   & (b) Bandit models on the article side   & (c) Detected changes on sample articles
\end{tabular}
\vspace{-3mm}
\caption{Performance comparison in Yahoo! Today Module.} \label{fig:yahoo}
\vspace{-4mm}
\end{figure*}

We compared all the algorithms on the large-scale clickstream dataset made available by the Yahoo Webscope program. 
This dataset contains 45,811,883 user visits to Yahoo Today Module in a ten-day period in May 2009. For each visit, both the user and each of the 10 candidate articles are associated with a feature vector of six dimensions (including a constant bias term) \cite{LinUCB}. 
In the news recommendation problem, it is generally believed that users' interests on news articles change over time; and it is confirmed in this large-scale dataset by our quantitative analysis. To illustrate our observations, we randomly sampled 5 articles and reported their real-time click-through-rate (CTR) in Figure \ref{fig:yahoo} (c), where each point is the average CTR over 2000 observations. Clearly, there are dramatic changes in those articles' popularity over time. For example, article 1's CTR kept decreasing after its debut, then increased in the next two days, and dropped eventually. Any recommendation algorithm failing to recognize such changes would suffer from a sub-optimal recommendation quality over time.  

The unbiased offline evaluation protocol proposed
in \cite{Li:2011:UOE:1935826.1935878} is used to compare different algorithms. CTR is used as the performance metric of all bandit algorithms. Following the same evaluation principle used in \cite{LinUCB}, we
normalized the resulting CTR from different algorithms by the corresponding
logged random strategy's CTR.
We tested two different settings on this dataset based on where to place the bandit model for reward estimation.

The first setting is to build bandit models for users, i.e., attaching $\btheta$ on the user side to learn users' preferences over articles. We included a non-personalized variant and a personalized variant of all the contextual bandit algorithms. In the non-personalized variant,  the bandit parameters are shared across all users, and thus the detected changes are synchronized across users. We name the resulting algorithms as uniform-LinUCB, uniform-adTS, and uniform-dLinUCB. In the personalized variant, each individual user is associated with an independent bandit parameter $\btheta_{u}$, and the change is only about him/herself. Since this dataset does not provide  user identities, we followed \cite{wu2016contextual} to cluster users into $N$ user groups and assume those in the same group share the same bandit parameter. We name the resulting algorithms as N-LinUCB, N-adTS and N-dLinUCB. To make the comparison more competitive, we also include a recently introduced collaborative bandit algorithm CLUB \cite{gentile2014online}, which combines collaborative filtering with bandit learning. 

From Figure \ref{fig:yahoo} (a), we can find that both the personalized and non-personalized variants of dLinUCB achieved significant improvement compared with all baselines. It is worth noticing that uniform-dLinUCB obtained around $50\%$ improvement against uniform-LinUCB, $15\%$ against N-LinUCB, and $25\%$ against CLUB. Clearly assuming all the users share the same preference over the recommendation candidates is very restrictive, which is confirmed by the improved performance from the personalized version over the non-personalized version of all bandit algorithms. Because dLinUCB maintains multiple slave models concurrently, each slave model is able to cover preference in a subgroup of users, i.e., achieving personalization automatically. We looked into those created slave models and found they closely correlated with the similarity between user features in different groups created by \cite{wu2016contextual}, although such external grouping was not disclosed to uniform-dLinUCB.  Although adTS and WMDUCB1 can also detect changes, its slow detection and reaction to the changes made it even worse than LinUCB on this dataset. Meta-Bandit is sensitive to its hyper-parameters and performed similarly to WMDUCB1, so that we excluded it from this comparison.

The second setting is to build bandit models for each article, i.e., attaching $\btheta$ on the article side to learn its popularity over time. Based on our quantitative analysis in the data set, we found that articles with short lifespans tend to have constant popularity. To emphasize the non-stationarity in this problem, we removed articles which existed less than 18 hours, and report the resulting performance in Figure \ref{fig:yahoo} (b). We can find that dLinUCB performed comparably to LinUCB at the beginning, while the adTS baselines failed to recognize the popularity of those articles from the beginning, as the popularity of most articles did not change immediately. In the second half of this period, however, we can clearly realize the improvement from dLinUCB. To understand what kind of changes dLinUCB recognized in this data set, we plot the detected changes of five randomly selected  articles in Figure \ref{fig:yahoo} (c), in which dotted vertical lines are our detected change points on corresponding articles. As we can find in most articles the critical changes of ground-truth CTR can be accurately recognized. For example, article 1 and article 2 at around May 4, and article 3 at around May 5. Unfortunately, we do not have any detailed information about these articles to verify the changes; otherwise it would be interesting to correspond these detected changes to real-world events. In Figure \ref{fig:yahoo} (b), we excluded the context-free bandit algorithms because they performed much worse and complicate the plots.

\subsection{Experiments on LastFM \& Delicious}
\begin{figure*}[t!h]
\centering
\begin{tabular}{c c c c}
\includegraphics[width=4.29cm]{./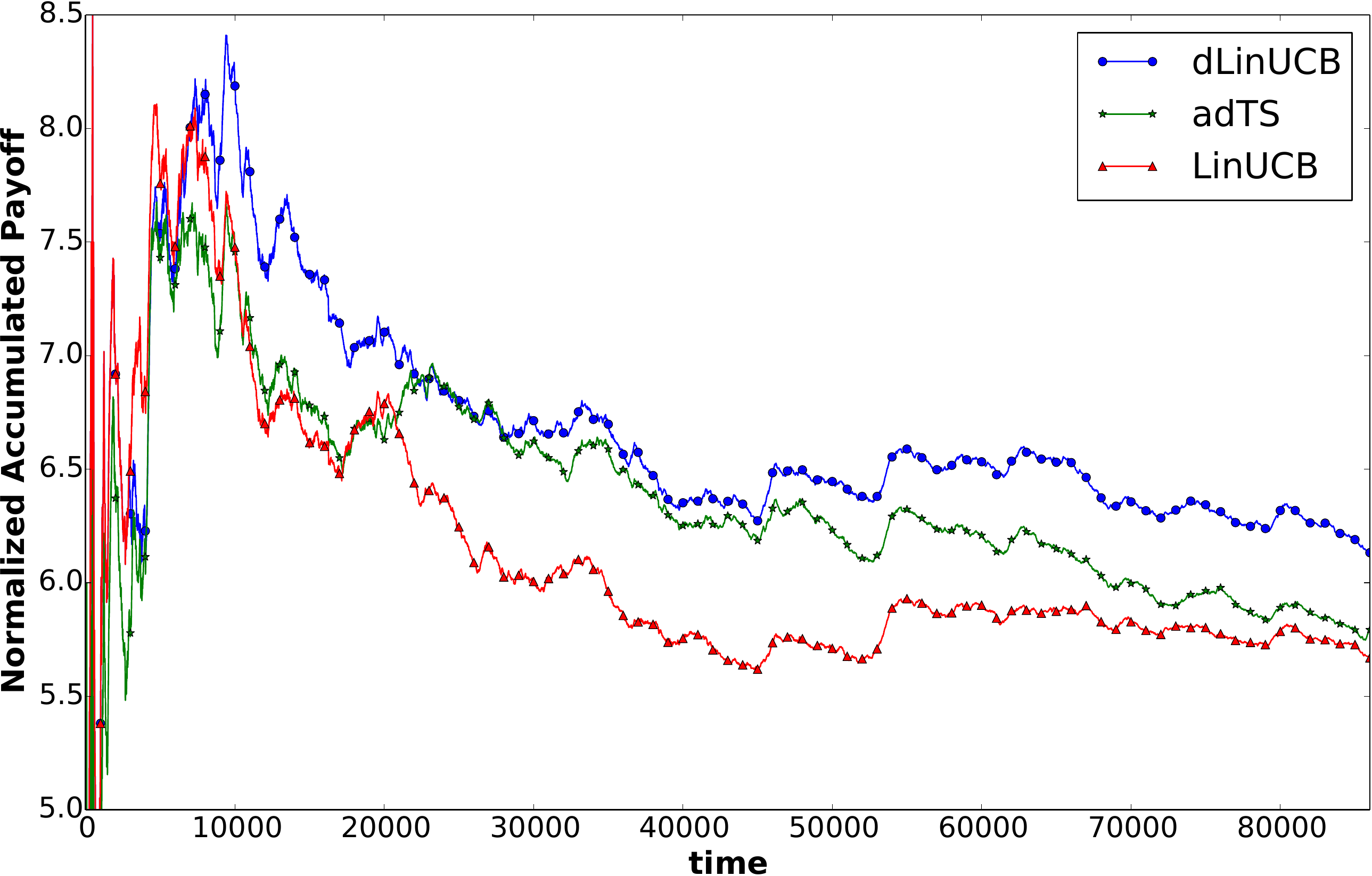} &
\includegraphics[width=4.45cm]{./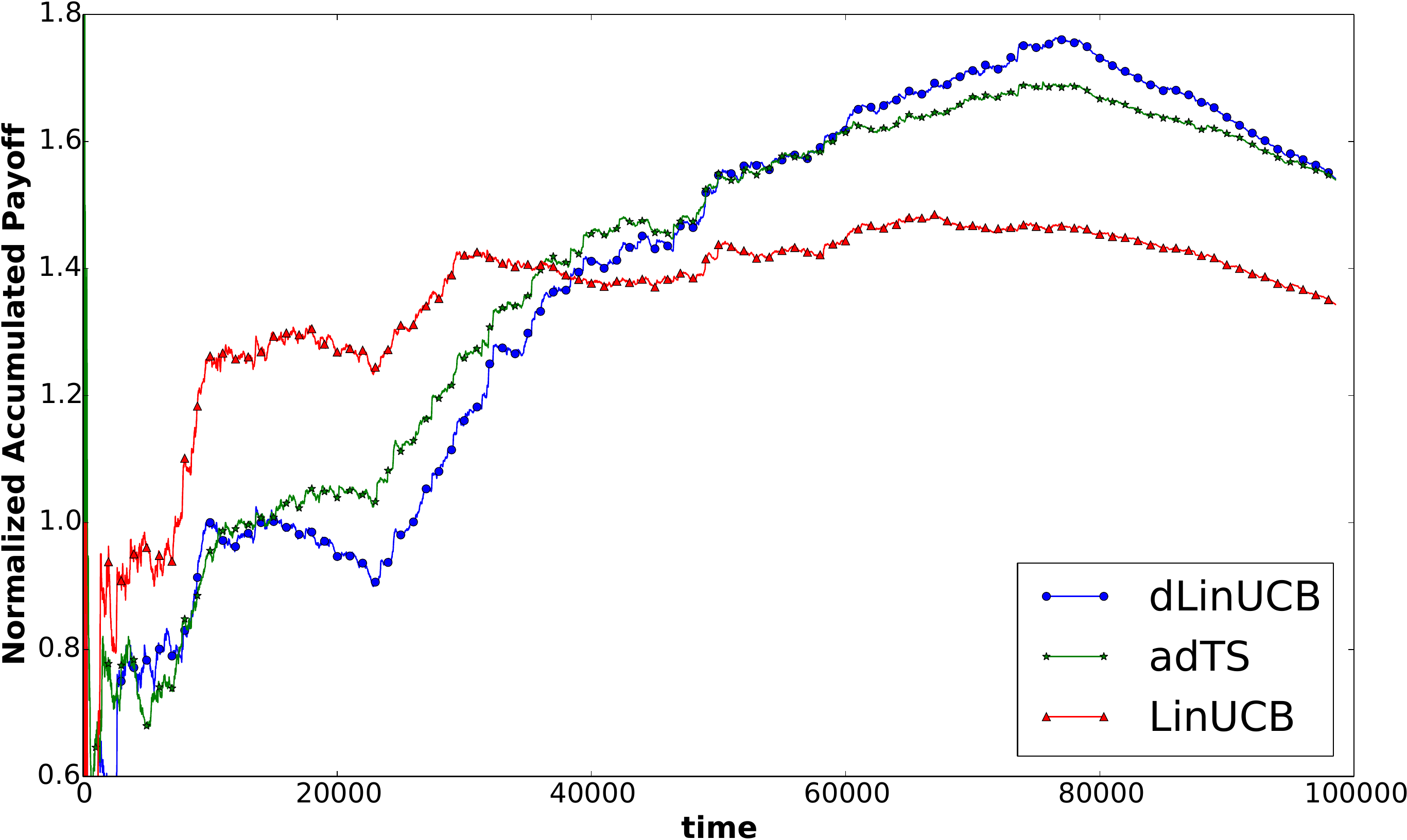} &
\includegraphics[width=4.29cm]{./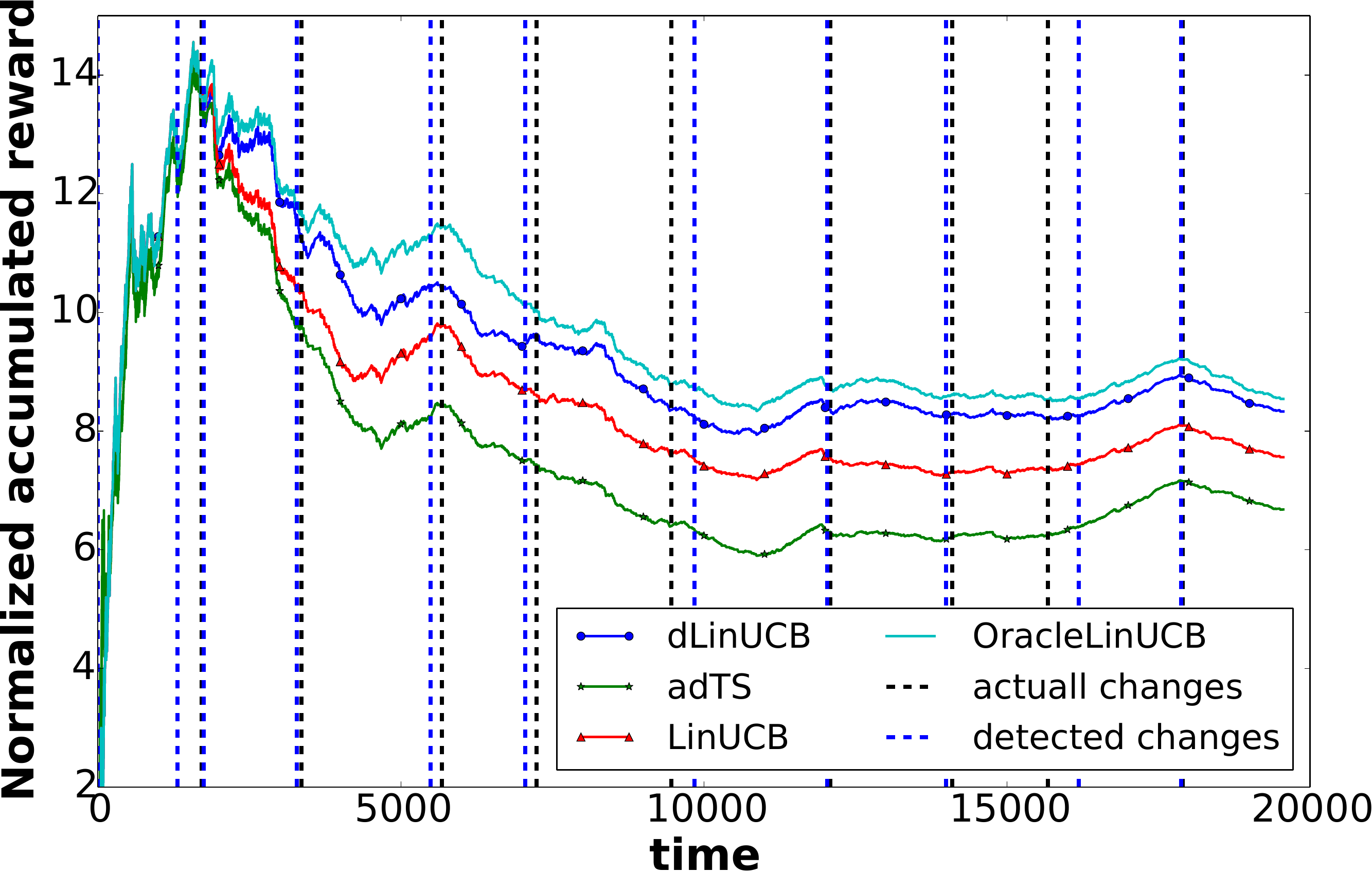} &
\includegraphics[width=4.29cm]{./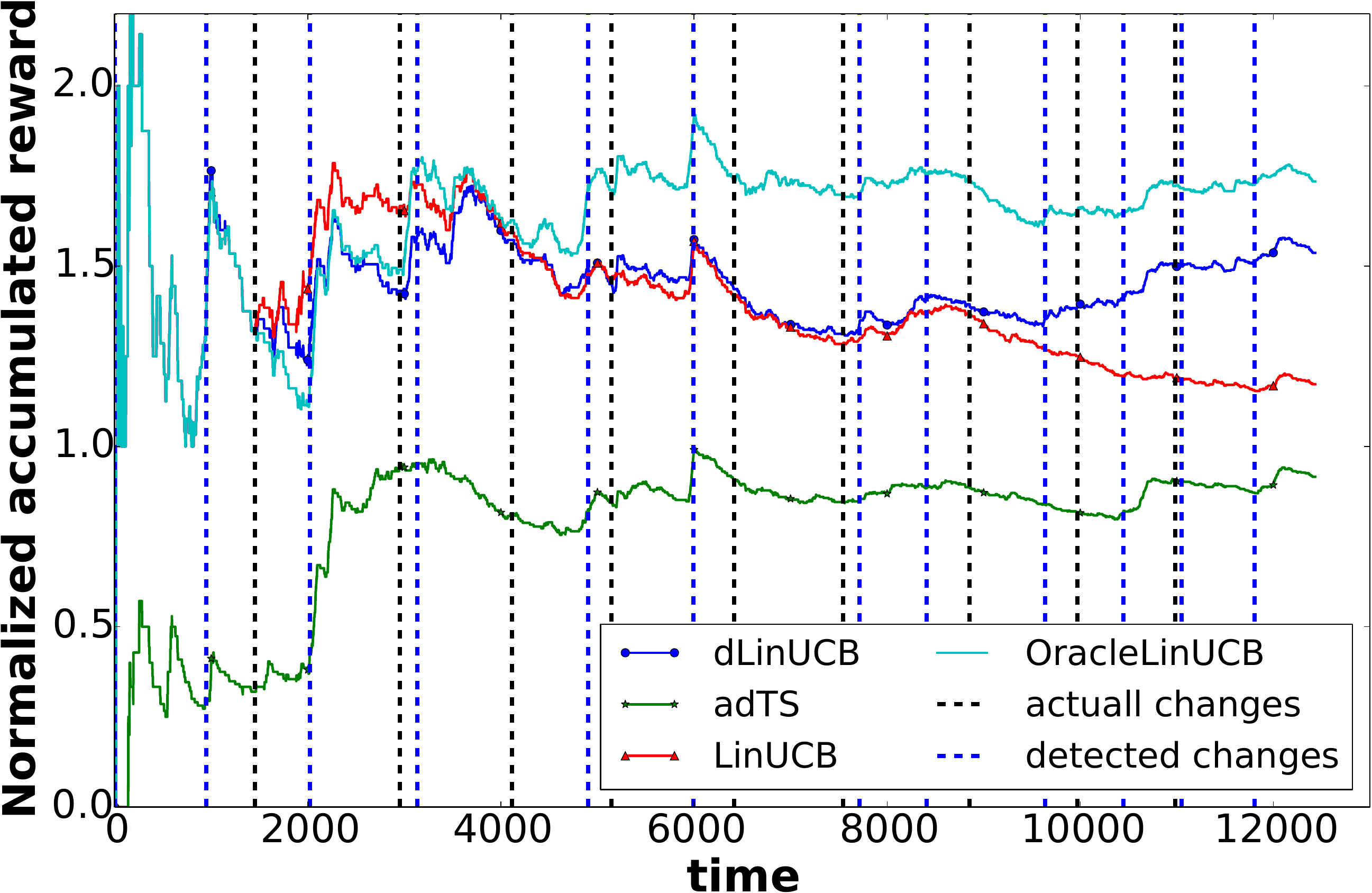} \\
\small (a) Normalized reward on LastFM  & \small (b) Normalized reward on Delicious   & \small (c) Cluster detection on LastFM  & \small (d) Cluster detection on Delicious
\normalsize
\end{tabular}
\vspace{-3mm}
\caption{Performance comparison in LastFM \& Delicious.} \label{fig:lastfm_delicious}
\vspace{-3mm}
\end{figure*}

The LastFM dataset is extracted from the music streaming service
Last.fm, and the Delicious dataset is extracted from the social bookmark sharing service Delicious. They were made availalbe on the HetRec 2011 workshop. 
The LastFM dataset contains 1892 users and 17632 items (artists). We treat the `listened artists' in each user as positive feedback. The Delicious dataset contains 1861 users and 69226 items (URLs). We treat the bookmarked URLs in each user as positive feedback. Following the settings in \cite{Gang}, we pre-processed these two datasets in order to fit them into the contextual bandit setting. Firstly, we used all tags associated with an item to create a TF-IDF feature vector to represent it. Then we used PCA to reduce the dimensionality of the feature vectors and retained the first 25 principle components to construct the context vectors, i.e., $d = 25$. We fixed the size of candidate arm pool to $K=25$; for a particular user $u$, we randomly picked one item from his/her nonzero reward items, and randomly picked the other 24 from those zero reward items. We followed \cite{hartland:hal-00113668} to simulate a non-stationary environment: we ordered observations chronologically inside each user, and built a single hybrid user by merging different users. Hence, the boundary between two consecutive batches of observations from two original users is treated as the preference change of the hybrid user. 

Normalized rewards on these two datasets are reported in Figure \ref{fig:lastfm_delicious} (a) \& (b). dLinUCB outperformed both LinUCB and adTS on LastFM. As Delicious is a much sparser dataset, both adTS and dLinUCB are worse than LinUCB at the beginning; but as more observations become available, they quickly catch up. Since the distribution of items in these two datasets are highly skewed \cite{Gang}, which makes the observations for each item very sparse, the context-free bandits performed very poorly on these two datasets. We therefore chose to exclude the context-free bandit algorithms from all the comparisons on these two datasets in our result report.


Each slave model created for this hybrid user can be understood as serving for a sub-population of users. We qualitatively studied those created slave models to investigate what kind of stationarity they have captured. On the LastFM dataset, each user is associated with a list of tags he/she gave to the artists. The tags are usually descriptive and reflect users' preference on music genres or artist styles. In each slave model, we use all the tags from the users being served by this model to generate a word cloud. Figure \ref{fig:wordcloud} are four representative groups identified on LastFM, which clearly correspond to four different music genres -- rock music, metal music, pop music and hip-hop music. dLinUCB recognizes those meaningful clusters purely from user click feedback. 

The way we simulate the non-stationary environment on these two datasets makes it possible for us to assess how well dLinUCB detects the changes. To ensure result visibility, we decide to report results obtained from user groups (otherwise there will be too many change points to plot). We first clustered all users in both of datasets into user groups according to their social network structure using spectral clustering \cite{Gang}. Then we selected the top 10 user groups according to the number of observations to create the hybrid user. We created a semi-oracle algorithm named as OracleLinUCB, which knows where the boundary is in the environment and resets LinUCB at each change point. The normalized rewards from these two datasets are reported in Figure \ref{fig:lastfm_delicious} (c) \& (d), in which the vertical lines are the actual change points in the environment and the detected points by dLinUCB. Since OracleLinUCB knows where the change is ahead of time, its performance can be seen as optimal. On LastMF, the observations are denser per user group, so that dLinUCB can almost always correctly identify the changes and achieve quite close performance to this oracle. But on Delicious, the sparse observations make it much harder for change detection; and more early and late detection happened in dLinUCB.


\begin{figure}[t]
\centering
\begin{tabular}{c c}
\includegraphics[width=0.41\linewidth]{./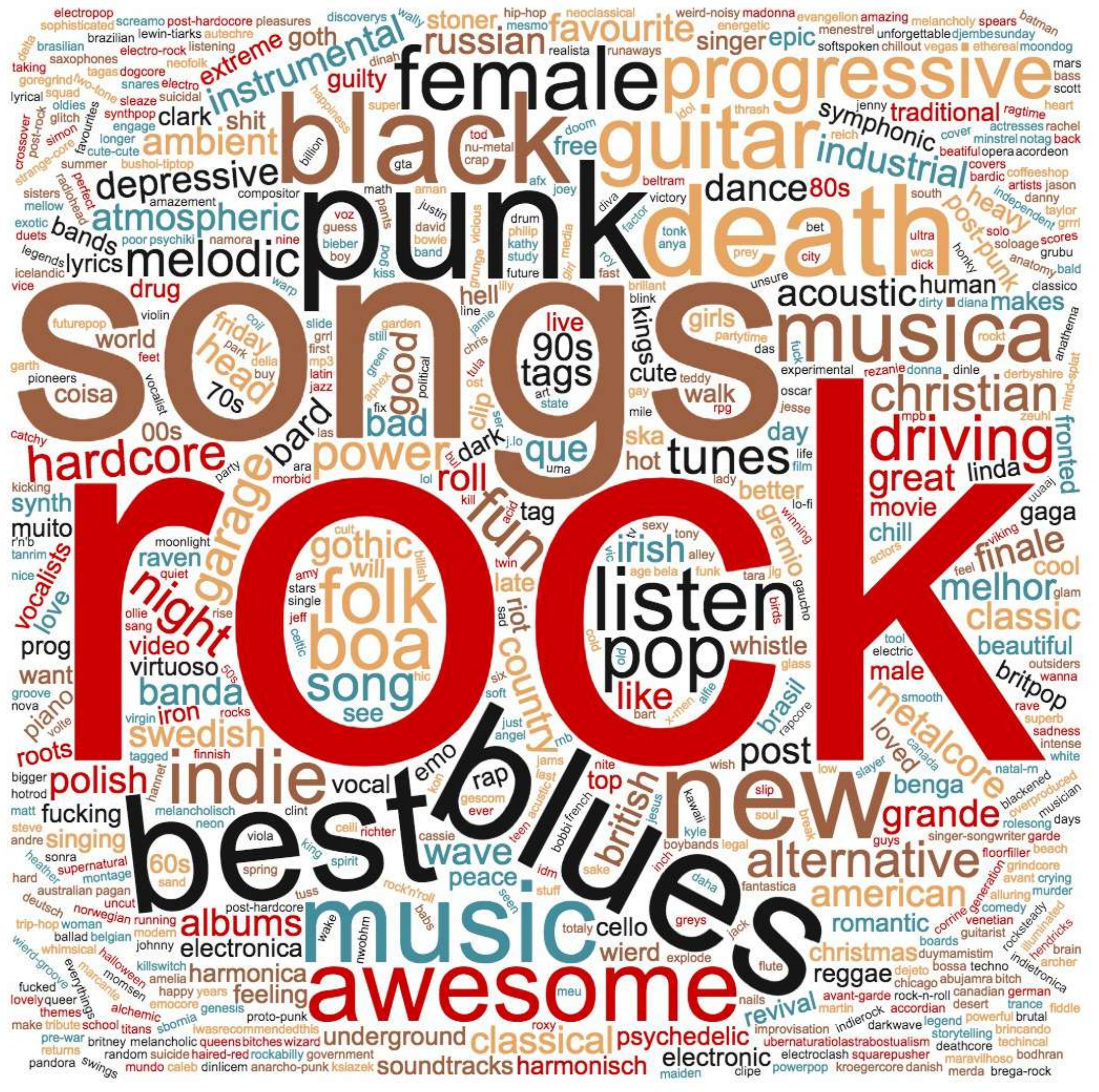} &
\includegraphics[width=0.41\linewidth]{./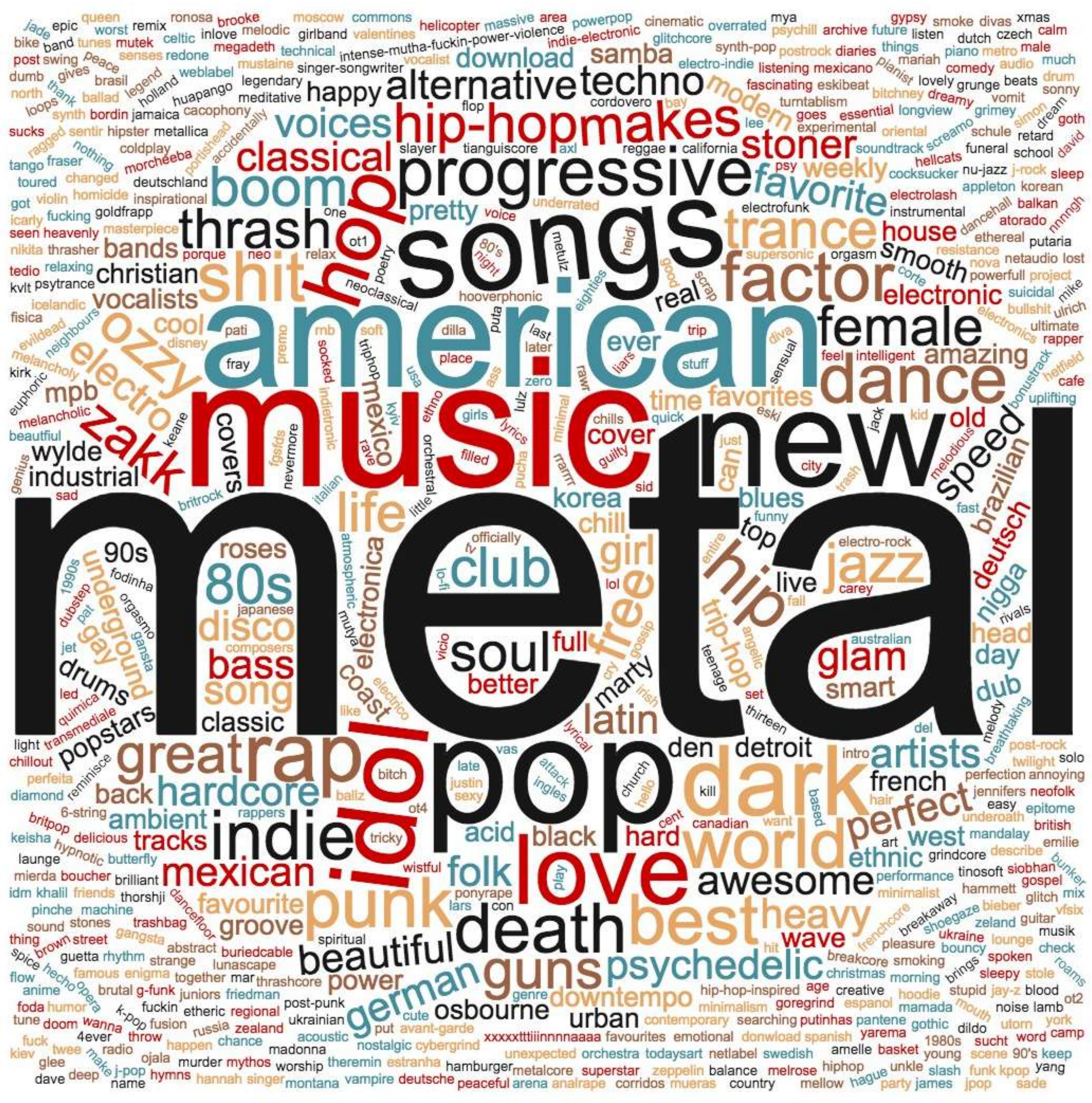} \\
\includegraphics[width=0.41\linewidth]{./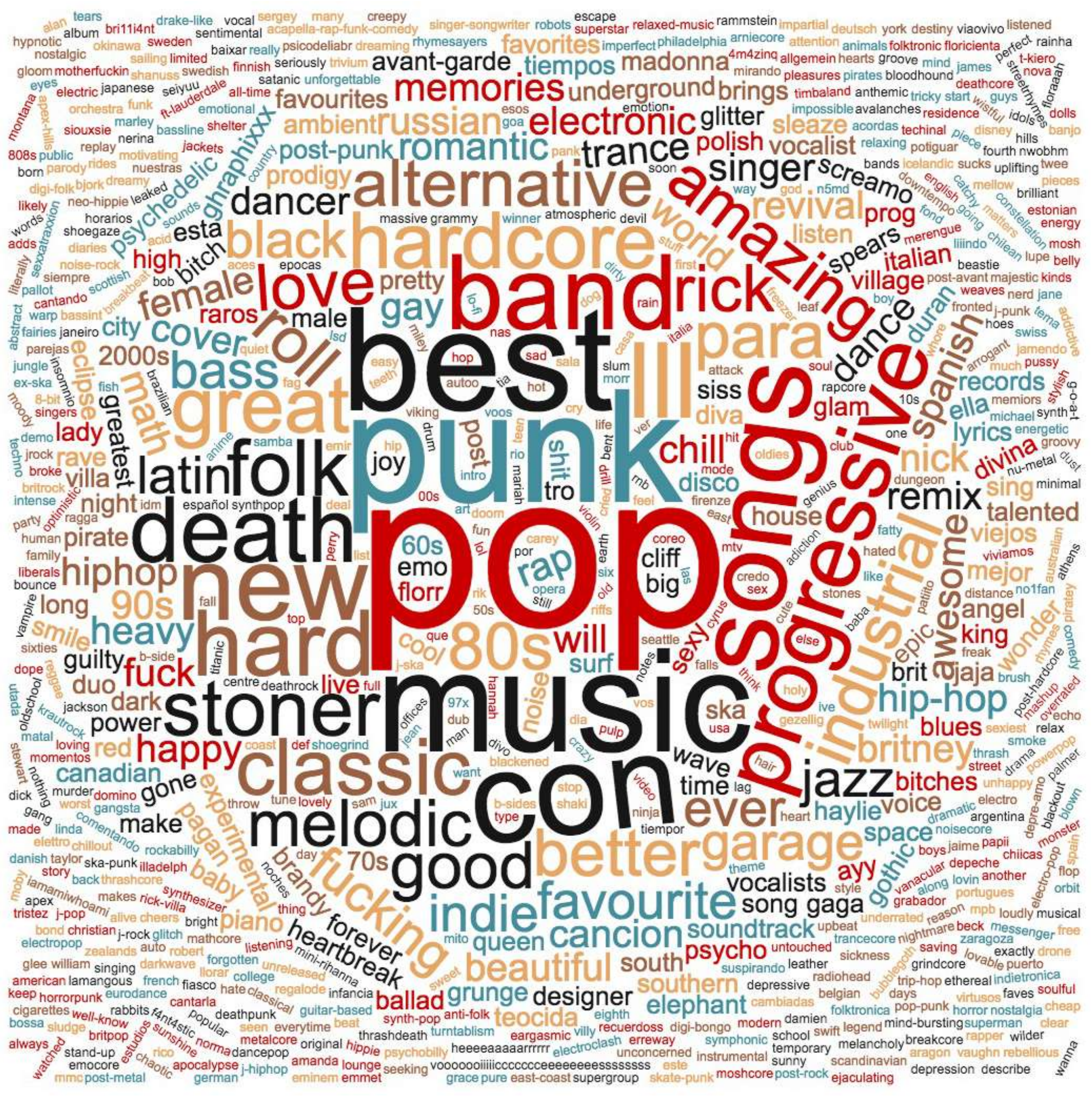} &
\includegraphics[width=0.41\linewidth]{./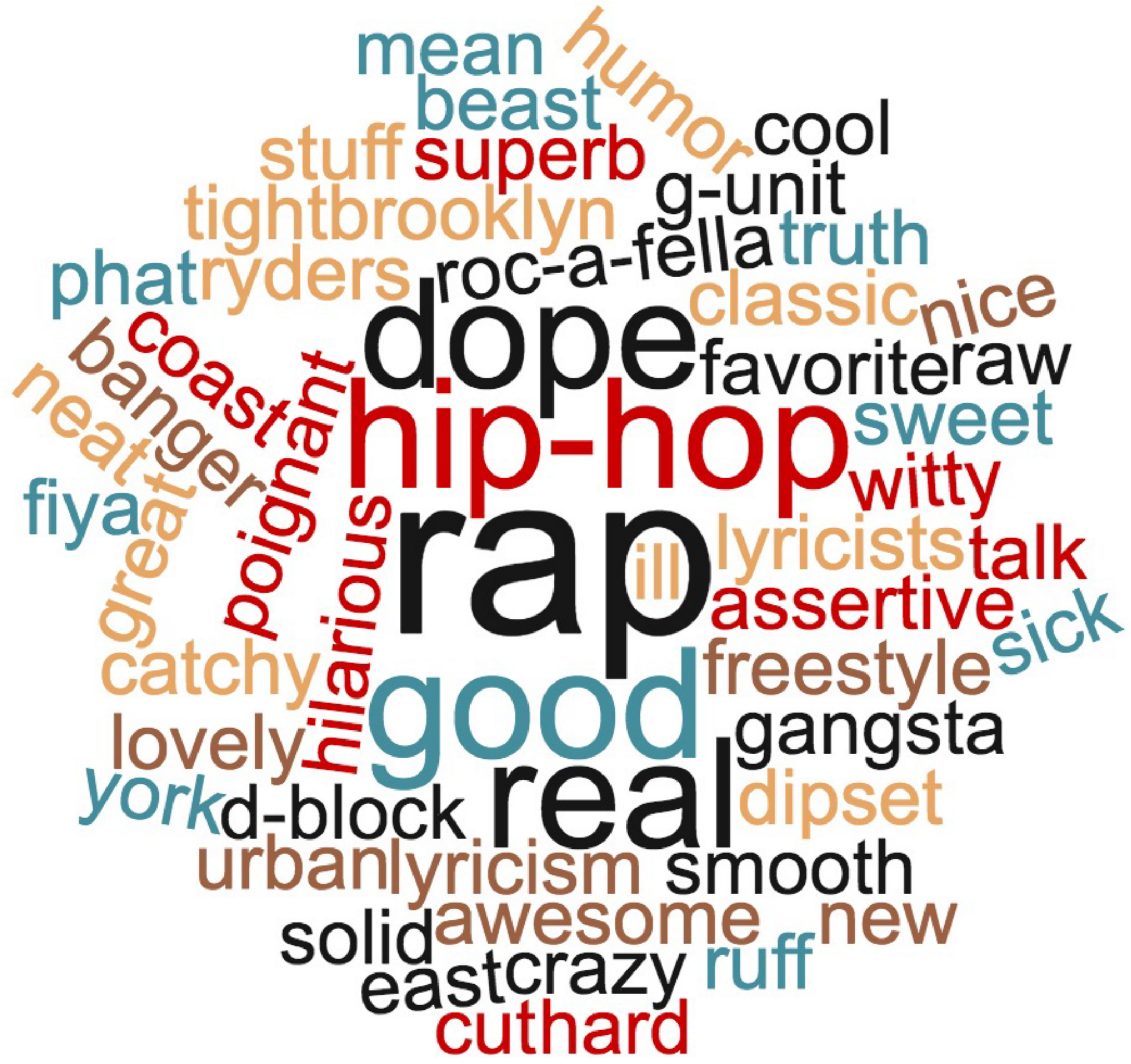} \\
\end{tabular}
\vspace{-4mm}
\caption{Word cloud of tags from four identified user groups in dLinUCB on LastFM dataset.} 
\label{fig:wordcloud}
\vspace{-5mm}
\end{figure}

\section{Conclusions \& Future Work}

In this paper, we develop a contextual bandit model dLinUCB for a piecewise stationary environment, which is very common in many important real-world applications but insufficiently investigated in existing works. By maintaining multiple contextual bandit models and tracking their reward estimation quality over time, dLinUCB adaptively updates its strategy for interacting with a changing environment. We rigorously prove an $O(\Gamma_T \sqrt{S_T}\ln S_T)$ upper regret bound, which is arguably the tightest upper regret bound any algorithm can achieve in such an environment without further assumption about the environment. Extensive experimentation in simulation and three real-world datasets verified the effectiveness and the reliability of our proposed method. 

As our future work, we are interested in extending dLinUCB to a continuously changing environment, such as Brownian motion, where reasonable approximation has to be made as a model becomes out of date right after it has been created. Right now, when serving for multiple users, dLinUCB treats them as identical or totally independent. As existing works have shed light on collaborative bandit learning \cite{wu2016contextual, FactorUCB, pmlr-v70-gentile17a}, it is meaningful to study non-stationary bandits in a collaborative environment. Last but not least, currently the master bandit model in dLinUCB does not utilize the available context information for `badness' estimation. It is necessary to incorporate such information to improve the change detection accuracy, which would lead to a further reduced regret.  


\section{Acknowledgments}
We thank the anonymous reviewers for their insightful comments. This work was supported in part by National Science Foundation Grant IIS-1553568 and IIS-1618948.

\bibliographystyle{ACM-Reference-Format}


\section{Appendix}
\subsection{Additional Theorems}
If the training instances $\{(\bx_i, r_i)\}_{i \in {\cI_{m,t}}}$ in a linear bandit model come from multiple distributions/environments, we separate the training instances in $\cI_{m,t}$ into two sets $\mathcal{H}_{m,t}$ and $\tilde {\mathcal{H}}_{m,t}$ so that instances from $\mathcal{H}_{m,t}$ are from the target stationary distribution, while instances in $\tilde {\mathcal{H}}_{m,t}$ are not. In this case, we provide the confidence bound for the reward estimation in Theorem \ref{theorem:LinUCBConfidenceBound_withcontamination}.

\begin{theorem} [LinUCB with contamination] \label{theorem:LinUCBConfidenceBound_withcontamination}
In LinUCB with a contaminated instance set  $\tilde {\mathcal{H}}_{m,t}$,  with probability at least $1-\delta_1$, we have
$\lvert \hat r_t(m) -  \bbE[r_t] \rvert     \leq \tilde B_t$, where $\tilde B_t(m,a) = \tilde \alpha_t \lVert \bx_{a_t} \rVert_{\bA_{t-1}^{-1}} $, $\tilde \alpha_t = \sigma^2 \sqrt{d \ln(1+\frac{|\cI_{m,t}|}{\lambda \delta_1})} + \sqrt{\lambda} + C_t(m)$, and $C_t = \sum_{i\in \tilde{ \mathcal{H}}_{m,t}}(\bx_{a_i} (\btheta_i^* - \btheta_{t_c}^*))$.
\end{theorem}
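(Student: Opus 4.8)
The plan is to re-run the textbook ridge-regression confidence analysis underlying LinUCB, but to carry the observations from the contaminated block $\tilde{\mathcal{H}}_{m,t}$ through the computation explicitly so that their effect surfaces as a single deterministic additive term. First I would expand $\hat\btheta_t(m) = \bA_t^{-1}(m)\bb_t(m)$ with $\bb_t(m) = \sum_{i\in\cI_{m,t}}\bx_{a_i}r_{a_i}$, substitute $r_{a_i} = \bx_{a_i}^\mt\btheta_i^* + \eta_i$, and split $\cI_{m,t} = \mathcal{H}_{m,t}\cup\tilde{\mathcal{H}}_{m,t}$. Using $\btheta_i^* = \btheta_{t_c}^*$ for every $i\in\mathcal{H}_{m,t}$ and $\sum_{i\in\cI_{m,t}}\bx_{a_i}\bx_{a_i}^\mt = \bA_t(m) - \lambda\bI$, this yields the decomposition
\[ \hat\btheta_t(m) - \btheta_{t_c}^* = -\lambda\bA_t^{-1}(m)\btheta_{t_c}^* + \bA_t^{-1}(m)\sum_{i\in\tilde{\mathcal{H}}_{m,t}}\bx_{a_i}\bx_{a_i}^\mt(\btheta_i^* - \btheta_{t_c}^*) + \bA_t^{-1}(m)\sum_{i\in\cI_{m,t}}\bx_{a_i}\eta_i . \]

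Next, since $\hat r_t(m) - \bbE[r_t] = \bx_{a_t}^\mt(\hat\btheta_t(m) - \btheta_{t_c}^*)$, I would apply Cauchy--Schwarz in the $\bA_t^{-1}(m)$ inner product to get $|\hat r_t(m) - \bbE[r_t]| \le \lVert\bx_{a_t}\rVert_{\bA_t^{-1}(m)}\cdot\lVert\hat\btheta_t(m) - \btheta_{t_c}^*\rVert_{\bA_t(m)}$, and bound the second factor termwise via the triangle inequality. The regularization term contributes at most $\sqrt{\lambda}\,\lVert\btheta_{t_c}^*\rVert_2 \le \sqrt{\lambda}$, using $\bA_t(m)\succeq\lambda\bI$ and $\lVert\btheta_{t_c}^*\rVert_2\le 1$. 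The noise term equals $\lVert\sum_{i\in\cI_{m,t}}\bx_{a_i}\eta_i\rVert_{\bA_t^{-1}(m)}$ and is at most $\sigma^2\sqrt{d\ln(1+|\cI_{m,t}|/(\lambda\delta_1))}$ with probability at least $1-\delta_1$ by the self-normalized martingale bound of Abbasi-Yadkori et al.\ \cite{Improved_Algorithm}, i.e.\ the same estimate already invoked in Theorem~\ref{theorem:slave_CB}. The contamination term equals $\lVert\sum_{i\in\tilde{\mathcal{H}}_{m,t}}\bx_{a_i}\,\bx_{a_i}^\mt(\btheta_i^* - \btheta_{t_c}^*)\rVert_{\bA_t^{-1}(m)}$, which I bound by $\sum_{i\in\tilde{\mathcal{H}}_{m,t}}|\bx_{a_i}^\mt(\btheta_i^* - \btheta_{t_c}^*)|\,\lVert\bx_{a_i}\rVert_{\bA_t^{-1}(m)}$ and then by $C_t(m)$ using $\lVert\bx_{a_i}\rVert_{\bA_t^{-1}(m)}\le 1$ for every index $i$ that entered $\bA_t(m)$ --- a one-line Sherman--Morrison fact, since $\bx^\mt(\bB + \bx\bx^\mt)^{-1}\bx = \bx^\mt\bB^{-1}\bx/(1 + \bx^\mt\bB^{-1}\bx) < 1$ with $\bB = \bA_t(m) - \bx_{a_i}\bx_{a_i}^\mt \succeq \lambda\bI$. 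Summing the three bounds recovers exactly $\tilde\alpha_t = \sigma^2\sqrt{d\ln(1+|\cI_{m,t}|/(\lambda\delta_1))} + \sqrt{\lambda} + C_t(m)$; since only the noise step is probabilistic, the inequality $|\hat r_t(m) - \bbE[r_t]| \le \tilde\alpha_t\lVert\bx_{a_t}\rVert_{\bA_{t-1}^{-1}}$ holds with probability at least $1-\delta_1$, the passage from $\bA_t(m)$ to $\bA_{t-1}(m)$ being the same indexing convention as in Theorem~\ref{theorem:slave_CB}.

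The only part of this that is not a verbatim copy of the stationary LinUCB argument is the contamination term, so that is where I expect the real work to sit. The subtlety is to keep the penalty at the ``clean'' scale $\sum_{i\in\tilde{\mathcal{H}}_{m,t}}|\bx_{a_i}^\mt(\btheta_i^* - \btheta_{t_c}^*)|$ rather than an inflated $\lambda^{-1/2}$ multiple of it; this rests precisely on the observation that each $\bx_{a_i}$ appearing in the contaminated sum is itself one of the design vectors composing $\bA_t(m)$, which forces $\lVert\bx_{a_i}\rVert_{\bA_t^{-1}(m)}\le 1$. I would also be a little careful that $C_t(m)$ is to be read with absolute values (equivalently, that the $\bA_t^{-1}(m)$-norm of the sum is dominated by the sum of per-term magnitudes), and that $\bbE[r_t]$ here denotes the expected reward $\bx_{a_t}^\mt\btheta_{t_c}^*$ under the target stationary parameter; both are routine once the decomposition in the first step is in place.
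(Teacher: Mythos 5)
Your proposal is correct and follows exactly the route the paper intends: the paper's own ``proof'' of this theorem is a one-line sketch deferring to Theorem 2 of Abbasi-Yadkori et al.\ plus Gaussian concentration, and your argument is precisely that analysis with the contaminated block carried through as the extra additive term $C_t(m)$. Your Sherman--Morrison observation that $\lVert\bx_{a_i}\rVert_{\bA_t^{-1}(m)}\le 1$ for design vectors, which keeps the contamination penalty at the clean scale rather than a $\lambda^{-1/2}$-inflated one, and your reading of $C_t(m)$ with absolute values, are exactly the details needed to make the paper's stated $\tilde\alpha_t$ come out.
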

Comparing $\tilde{B}_t(m,a)$ with $B_t(m,a)$, we can see that when the reward deviation of an arm (the $1-\rho$ portion of arms that do not satisfy Eq \eqref{assumtion:changeAssumption} in Assumption \ref{assumtion:changeAssumption}) is small with $(1-\rho) \Delta_{\text{small}} \leq \big(\sigma^2 \sqrt{d \ln(1+\frac{|\cI_{m,t}|}{\lambda \delta_1})} \big)/\big( \tilde{ \mathcal{H}}_{m,t}\big)$, the same confidence bound scaling can be achieved. 

\begin{theorem}[Chernoff Bound]\label{theorem:chernoffBound}
Let $Z_1, Z_2,...,Z_n$ be random variables on $\mathbb{R}$ such that $a \leq Z_i \leq b$. Define $W_n = \sum_{i=1}^n Z_i $, for all $c>0$ we have,
\small
\begin{equation*}
\mathbb{P}(\lvert W_n - \mathbb{E}(W_n) \rvert > c\mathbb{E}(W_n) ) \leq 2 \exp{\Big(\frac{-2c^2\mathbb{E}(W_n)^2}{n (b-a)^2}\Big)}
\end{equation*}
\normalsize
\end{theorem}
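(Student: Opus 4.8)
The plan is to observe that the claimed inequality is precisely Hoeffding's inequality with the deviation parameter taken to be $t=c\,\bbE[W_n]$, so it suffices to reproduce the standard Chernoff-method argument (reading the $Z_i$ as independent, as they are in the intended application to the error indicators $e_i(m)$, and noting the implicit assumption $\bbE[W_n]\ge 0$ so that $c\,\bbE[W_n]>0$ is a genuine upper deviation; when $\bbE[W_n]=0$ the bound is trivial since the right-hand side is $2$). First I would control the upper tail $\bbP\big(W_n-\bbE[W_n]>t\big)$ for $t>0$: by the exponential Markov inequality, for every $s>0$,
\[
\bbP\big(W_n-\bbE[W_n]>t\big)\;\le\; e^{-st}\,\bbE\!\left[e^{s(W_n-\bbE[W_n])}\right]\;=\;e^{-st}\prod_{i=1}^n\bbE\!\left[e^{s(Z_i-\bbE[Z_i])}\right],
\]
where the last equality uses independence of the $Z_i$.

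The hard part will be bounding each factor via Hoeffding's lemma: for a random variable $X\in[a,b]$ with $\bbE[X]=0$ one has $\bbE[e^{sX}]\le\exp\!\big(s^2(b-a)^2/8\big)$. I would prove this by applying it to $X:=Z_i-\bbE[Z_i]$, whose range is an interval of length $b-a$; convexity of $x\mapsto e^{sx}$ lets one dominate $e^{sX}$ on that interval by the chord through its endpoints, taking expectations removes the linear term because $\bbE[X]=0$, and the remaining function $\psi(s)$ of $s$ satisfies $\psi(0)=\psi'(0)=0$ and $\psi''(s)\le(b-a)^2/4$, so Taylor's theorem gives $\psi(s)\le s^2(b-a)^2/8$. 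This moment-generating-function estimate is the only nonroutine ingredient.

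Combining the two bounds yields $\bbP\big(W_n-\bbE[W_n]>t\big)\le\exp\!\big(-st+n s^2(b-a)^2/8\big)$; minimizing the exponent over $s>0$ at $s^\star=4t/\big(n(b-a)^2\big)$ gives $\bbP\big(W_n-\bbE[W_n]>t\big)\le\exp\!\big(-2t^2/(n(b-a)^2)\big)$. Running the same argument on $\{-Z_i\}$ bounds the lower tail $\bbP\big(W_n-\bbE[W_n]<-t\big)$ identically, and a union bound over the two one-sided events contributes the factor $2$. Substituting $t=c\,\bbE[W_n]$ then produces exactly $\bbP\big(|W_n-\bbE[W_n]|>c\,\bbE[W_n]\big)\le 2\exp\!\big(-2c^2\bbE[W_n]^2/(n(b-a)^2)\big)$, completing the proof.
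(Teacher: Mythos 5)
Your proof is correct, and it is the standard Chernoff/Hoeffding argument: exponential Markov inequality, factorization of the moment generating function over independent summands, Hoeffding's lemma $\bbE[e^{sX}]\le e^{s^2(b-a)^2/8}$ for a centered bounded variable, optimization at $s^\star=4t/(n(b-a)^2)$, and a union bound over the two tails before substituting $t=c\,\bbE[W_n]$. The paper itself gives no proof of this theorem --- it is quoted as a known concentration inequality and only invoked inside the proofs of Lemmas \ref{lemma:early_detection}, \ref{lemma:late_detection}, and \ref{lemma:N_bound} --- so there is no alternative route to compare against; yours is the canonical one. You were also right to flag the two hypotheses the statement omits but silently needs: independence of the $Z_i$ (without it the product step fails and the bound is false in general) and $\bbE[W_n]\ge 0$ (otherwise the event $|W_n-\bbE[W_n]|>c\,\bbE[W_n]$ has probability one while the right-hand side can be less than one). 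Both hold in the paper's application, where the $Z_i$ are the indicator variables $e_i(m)$, so your reading matches the intended use.
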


\subsection{Proof of Theorems and Lemmas}

\begin{proof}[Proof sketch of Theorem \ref{theorem:slave_CB} and Theorem \ref{theorem:LinUCBConfidenceBound_withcontamination}]
The proof of Eq \eqref{eq:slave_residual_bound} in Theorem \ref{theorem:slave_CB} and Theorem \ref{theorem:LinUCBConfidenceBound_withcontamination} are mainly based on the proof of Theorem 2 in \cite{Improved_Algorithm} and the concentration property of Gaussian noise.
\end{proof}

\begin{proof}[Proof of Lemma \ref{lemma:early_detection}]
According to Chernoff Bound, we have $P(\hat e_t(m) \leq \delta_1 + \frac{\ln (1/\delta_2)}{2\tilde{\tau}(m)}) \geq 1-\delta_2$, which concludes the proof. 
\end{proof}

\begin{proof}[Proof of Lemma \ref{lemma:late_detection}]
At time $i \geq t_{c_{j+1}}$, which means the environment has already changed from $\btheta^*_{c_{j}}$ to $\btheta^*_{c_{j+1}}$, we have,
\small
\begin{align} \label{eq:badness_afterchange}
    & \bbP(e_i(m) = 1)  = \bbP(\lvert \hat r_i -r_i \rvert > B_i(m,a_i) + \epsilon ) \\ \nonumber
     = &\bbP(\lvert ( \bx_i^{\mt} \hat \btheta_i  - \bx_i^{\mt} \btheta^*_{t_{c_j}} -\eta_i) + ( \bx_i^{\mt} \btheta^*_{t_c}- \bx_i^{\mt} \btheta_i^*   ) \rvert > B_i(m,a_i) + \epsilon ) \nonumber
\end{align}
\begin{align}
      \geq & \bbP(\lvert  \bx_i^{\mt} \hat \btheta_i  - \bx_i^{\mt} \btheta^*_{t_{c_j}} -\eta_i \rvert \leq  \tilde{B}_{i}(m,a_i) + \epsilon) \nonumber \\ 
     & \times \bbP( \lvert \bx_i^{\mt} \btheta^*_{t_{c_j}}- \bx_i^{\mt} \btheta_i^*   \rvert > \tilde{B}_{i}(m,a_i) + B_{i}(m,a_i) + 2\epsilon )\nonumber
\end{align}
\normalsize
According to Theorem \ref{theorem:LinUCBConfidenceBound_withcontamination}, we have $\bbP\big(\lvert  \bx_i^{\mt} \hat \btheta_i  - \bx_i^{\mt} \btheta^*_{t_{c_j}} -\eta_i \rvert \leq  \tilde{B}_{i}(m,a_i) + \epsilon\big)  \geq 1-\delta_1$.  Define $U_{c_j}$ as the upper bound of $\tilde B_{i}(m,a_i)+ B_{t_c}(m,a_i) + 2\epsilon $. If the change gap $\Delta_{c_j}$ satisfies $\Delta_{c_j} \geq U_{c_j} $, we have $\bbP(e_i(m) = 1) \geq \rho (1-\delta_1) $. 

Next, we will prove that $\Delta_{c_j} \geq U_{c_j} $ can be achieved by a properly set $\delta_1$. Similar as the proof in Step 2 of Theorem \ref{theorem:regretBoundOnly1}, where we bound $k_c$, we have with a high probability that $\tilde B_{i}(m,a_i) + B_{t_{c_j}}(m,a_i)  + 2\epsilon  \leq  2\epsilon + 2\sqrt{\lambda} +  \frac{2}{\sqrt{\lambda}} S_{c_j}\rho \big(1- \rho (1-\delta_1)\big) = U_{c_j} $. When $\Delta_{c_j} > 2\sqrt{\lambda} + 2\epsilon $, $S_{\text{min}} > \frac{\sqrt{\lambda}}{2\rho}(\Delta_{c_j} - 2\sqrt{\lambda} - 2\epsilon )$ and $\delta_1 \leq 1- \frac{1}{\rho}(1-\frac{\sqrt{\lambda}}{2S_{\text{min}}\rho}\big(\Delta_{c_j} - 2\sqrt{\lambda} - 2\epsilon )\big)$, $\Delta_{c_j} > U_{c_j}$ can be achieved. 


Eq \eqref{eq:badness_afterchange} indicates when the environment has changed for a slave model $m$, with a high probability of $e_i(m) =1$ and slave model $m$ will not be updated, which avoids possible contamination in $m$. According to the concentration inequality in Theorem \ref{theorem:chernoffBound}, with a probability at least $1-\delta_2$, we have,
\small
\begin{equation*}
\sum_{i=t-\tau}^{t} e_i(m) \geq  \bbE[\sum_{i=t-\tau}^{t} e_i(m)] - \sqrt{\frac{\tau}{2} \ln\frac{1}{\delta_2}} \geq \rho (1-\delta_1)\tau - \sqrt{\frac{ \tau}{2} \ln\frac{1}{\delta_2}}
\end{equation*}
\normalsize
With simple rewriting, we have when $\tau \geq \frac{2\ln \frac{2}{\delta_2 }}{ (\rho (1-\delta_1) - \delta_1)^2}$, $\rho (1-\delta_1)\tau - \sqrt{\frac{ \tau}{2} \ln\frac{1}{\delta_2}} \geq  \delta_1 \tau +  \sqrt{\frac{ \tau}{2} \ln\frac{1}{\delta_2}}$, which means that with a probability at least $1-\delta_2$, $\hat e_t(m) = \frac{ \sum_{i=t-\tau}^{t} e_i}{\tau} \geq  \delta_1 +  \sqrt{\frac{1}{2\tau} \ln \frac{1}{\delta_2}}$
\end{proof}

\begin{proof}[Proof of Lemma \ref{lemma:N_bound}]

Under the model selection strategy in line
4 of Algorithm \ref{alg}, using similar proof techniques as in Theorem 1 of \cite{UCB1}, we have
\begin{equation} \label{eq:N_bound}
\small
 \tilde  N_{c_j}(m)  \leq l + \sum_{t={t_{c_j}}}^{\infty} \sum_{s=1}^{t-1} \sum_{s_i = l}^{t-1}  \mathds{1} \{\hat e_{s_i}(m) - d_{s_i}(m) \leq  \hat e_{s}(m_{c_j}^*) - d_{s}(m_{c_j}^*)  \}
  \normalsize
\end{equation}
in which $l = \ceil[\big]{(8 \log S_{c_j})/g_{m,m_{c_j}}^2}$.  $\hat e_{s_i}(m) - d_{s_i}(m) \leq  \hat e_{s}(m_{c_j}^*) - d_{s}(m_{c_j}^*)$ implies that at least one of the three following inequalities must hold,
\small
 \begin{equation} \label{eq:case1}
 \hat e_i(m_{c_j}^*) \geq \bbE[e(m_{c_j}^*)] + d_i(m_{c_j}^*)
 \end{equation}
  \begin{equation}\label{eq:case2}
 \hat e_i(m) \leq \bbE[e(m)] - d_i(m)
 \end{equation}
  \begin{equation}\label{eq:case3}
 \bbE [e_i(m)] - 2d_i(m) \leq \bbE[e(m_{c_j}^*)] 
 \end{equation}
\normalsize
Intuitively, Eq \eqref{eq:case1}, \eqref{eq:case2} and \eqref{eq:case3} correspond to the following three cases respectively. First, the expected `badness' of the optimal slave model $m_{c_j}^*$ is substantially over-estimated. Second, the expected `badness' of the slave model $m$ is substantially under-estimated. Third, the expected `badness' of the two slave models $m_{c_j}^*$ and $m$ are very close to each other.  
 
According to Theorem \ref{theorem:chernoffBound}, we have $\bbP \big(\hat e_i(m_{c_j}^*) \geq \bbE[e(m_{c_j}^*)] + d_i(m_{c_j}^*) \big) \leq \delta_2  $ and $\bbP \big(\hat e_i(m) \leq \bbE[e(m)] - d_i(m) \big) \leq \delta_2 $. For $s_i \geq l$, we have $\bbE [e(m)] -  \bbE[e(m_{c_j}^*)] - 2d_i(m) \geq 0$, which means the case in Eq \eqref{eq:case3} almost never occurs. Substituting the probability for Eq \eqref{eq:case1}, \eqref{eq:case2} and \eqref{eq:case3} into Eq \eqref{eq:N_bound}, and when $\delta_2 = t^{-4}$, 
\small
\begin{equation*}
\tilde  N_{c_j}(m)  \leq \ceil[\big]{(8 \ln S_{c_j})/g_{m,m_{c_j}^*}^2} + \sum_{t=1}^{\infty} \sum_{s=1}^{t-1} \sum_{s_i = l}^{t-1} 2 t^{-4}  \leq (8 \ln S_{c_j})/g_{m,m_{c_j}^*}^2 + 1 + \frac{\pi^2}{3}  
\end{equation*}
\normalsize
which finishes the proof.
\end{proof}

\end{document}